\documentclass[journal,twoside,web]{ieeecolor}
\usepackage{arxiv_generic}
\usepackage{cite}
\usepackage{amsmath,amssymb,amsfonts}
\usepackage[ruled,vlined]{algorithm2e}
\usepackage{graphicx}
\usepackage{textcomp}
\usepackage{hyperref}
\usepackage{times}
\usepackage{bbm}

\newtheorem{assumption}{Assumption}
\newtheorem{claim}{Claim}
\newtheorem{remark}{Remark}
\newtheorem{theorem}{Theorem}
\newtheorem{definition}{Definition}
\newtheorem{proposition}{Proposition}
\newtheorem{lemma}{Lemma}

\pdfminorversion=4

\bibliographystyle{plain} % We choose the "plain" reference style

\newcommand{\bR}{\mathbb{R}}
\newcommand{\bI}{\mathbbm{1}}
\newcommand{\bE}{\mathbb{E}}
\newcommand{\bP}{\mathbb{P}}
\newcommand{\cS}{\mathcal{S}}
\newcommand{\cX}{\mathcal{X}}
\newcommand{\cT}{\mathcal{T}}
\newcommand{\cE}{\mathcal{E}}
\newcommand{\wV}{\widehat{V}}
\newcommand{\rkhsnorm}{\bar{\nu}}
\newcommand{\poly}{\mbox{\textit{poly}}}
\newcommand\init{\mathrel{\stackrel{\makebox[0pt]{\mbox{\normalfont\tiny iid}}}{\sim}}}

\def\BibTeX{{\rm B\kern-.05em{\sc i\kern-.025em b}\kern-.08em
    T\kern-.1667em\lower.7ex\hbox{E}\kern-.125emX}}
% \markboth{IEEE Transactions on Automatic Control, VOL. XX, NO. XX, XXXX 2017}
\markboth{Temporal Difference Learning with Neural Network Approximation}
{Cayci \MakeLowercase{\textit{et al.}}: Temporal Difference Learning with Neural Network Approximation}

\begin{document}
\title{Sample Complexity and Overparameterization Bounds for Temporal Difference Learning with Neural Network Approximation}
\author{Semih Cayci, Siddhartha Satpathi, Niao He, and R. Srikant, \IEEEmembership{Fellow, IEEE}
% \thanks{This paragraph of the first footnote will contain the date on 
% which you submitted your paper for review. It will also contain support 
% information, including sponsor and financial support acknowledgment. For 
% example, ``This work was supported in part by the U.S. Department of 
% Commerce under Grant BS123456.'' }
\thanks{S. Cayci and S. Satpathi are with Coordinated Science Laboratory at the University of Illinois at Urbana-Champaign, Urbana, IL 61801, USA (e-mails: \{scayci, ssatpth2\}@illinois.edu).}
\thanks{N. He is with the Department of Computer Science at ETH Zurich, Zurich 8006, Switzerland (e-mail: niao.he@inf.ethz.ch).}
\thanks{R. Srikant is with c3.ai Digital Transformation Institute, the Department of Electrical and Computer Engineering and Coordinated Science Laboratory at the University of Illinois at Urbana-Champaign, Urbana, IL 61801, USA  (e-mail: rsrikant@illinois.edu).}}

\maketitle

\begin{abstract}
 
In this paper, we study the dynamics of temporal difference learning with neural network-based value function approximation over a general state space, namely, \emph{Neural TD learning}. We consider two practically used algorithms, projection-free and max-norm regularized Neural TD learning, and establish the first convergence bounds for these algorithms. An interesting observation from our results is that max-norm regularization can dramatically improve the performance of TD learning algorithms, both in terms of sample complexity and overparameterization. In particular, we prove that max-norm regularization improves state-of-the-art convergence bounds. The results in this work rely on a novel Lyapunov drift analysis of the network parameters as a stopped and controlled random process.

\end{abstract}

\begin{IEEEkeywords}
Reinforcement learning, temporal-difference learning, neural networks, stochastic approximation
\end{IEEEkeywords}

\section{Introduction}
Recently, deep reinforcement learning (RL) algorithms have achieved significant breakthroughs in challenging high-dimensional problems in a broad spectrum of applications including video gaming \cite{mnih2013playing, silver2017mastering, silver2017masteringc}, natural language processing \cite{li2016deep}, and robotics \cite{gu2017deep, kalashnikov2018qt}. An important component of these success stories lies in the power and versatility provided by neural networks in function approximation. Despite the impressive empirical success, the convergence properties of RL algorithms with neural network approximation are not yet fully understood due to their inherent nonlinearity. 

In this paper, we investigate the convergence of temporal-difference (TD) learning algorithm equipped with neural network approximation, namely Neural TD learning, which is an important building block of many deep RL algorithms. Convergence of TD learning with linear function approximation and least-squares approximation has been established in the literature \cite{bertsekas2011temporal, tsitsiklis1997analysis, yu2009convergence}. On the other hand, it is well-known that using nonlinear approximation may lead to divergence in TD learning \cite{tsitsiklis1997analysis}. Nonetheless, TD learning with neural network approximation is widely used in practice for policy evaluation because of its simplicity and empirical effectiveness \cite{lillicrap2015continuous, wang2019neural}. Therefore, it is important to understand and analyze the convergence properties of Neural TD learning. Recent study of overparameterized networks in the so-called neural tangent kernel (NTK) regime provided important insights in explaining the empirical success of neural networks in supervised learning \cite{jacot2018neural, arora2019fine, chizat2019lazy, allen2018learning, du2018gradient, ji2019polylogarithmic}, and thereafter reinforcement learning \cite{cai2019neural, sirignano2019asymptotics, brandfonbrener2019geometric,xu2020finite}. Despite the theoretical insights provided by recent studies, there is still a large gap between theory and practice. As we will discuss later, these prior works  either consider Neural TD learning in the infinite width limit for finite state spaces, or Neural TD learning with $\ell_2$-projection; neither of which is used in practice. Moreover, explicit characterization of the sample complexity and the amount of  overparameterization  required for Neural TD learning algorithms to approximate the true value function within arbitrary accuracy has remained elusive, which we address in this paper. 

% Recently, the theoretical analysis of overparameterized neural networks, particularly the neural tangent kernel (NTK) provided important insights in explaining the empirical success of neural networks in supervised learning applications \cite{jacot2018neural, chizat2019lazy, du2018gradient, allen2018learning, ji2019polylogarithmic}. Thereafter, the neural tangent kernel analysis was utilized to analyze the approximation capabilities of neural reinforcement learning algorithms in the overparameterized regime. The existing lit

% which utilizes an extra projection step in each iteration to bound the change of network parameters \cite{cai2019neural, xu2020finite}.  

% \textit{The main contribution of our paper is as follows: we study Neural TD learning for policy evaluation problems with possibly uncountably infinite state spaces and provide bounds on both the number of samples and the amount of overparameterization required to achieve a desired level of performance.}

\subsection{Main Contributions}
The paper presents a non-asymptotic analysis of TD learning with neural network approximation. We elaborate on some of the contributions in this paper below:

\begin{itemize}
    
    \item \textbf{Analysis of Neural TD learning:} We analyze two practically used Neural TD learning algorithms: (i) vanilla projection-free Neural TD and (ii) max-norm regularized Neural TD. We prove, for the first time, that both algorithms achieve any given target error within a provably rich function class, which is dense in the space of continuous functions over a compact state space. In particular, we establish explicit bounds on the required number of samples, step-size and network width to achieve a given target error.
    
    \item \textbf{Improved convergence bounds:} We show that projection-free and max-norm regularized Neural TD improve the prior state-of-the-art overparameterization bounds by factors of $1/\epsilon^2$ and $1/\epsilon^6$, respectively, for a given target error $\epsilon$. Notably, we prove that max-norm regularized Neural TD  achieves the sharpest overparameterization and sample complexity bounds in the literature, which theoretically supports its empirical effectiveness.
    
    \item \textbf{Key insights on regularization:} Our analysis reveals that using regularization based on $\ell_\infty$ geometry leads to considerably improved overparameterization and sample complexity bounds compared to the $\ell_2$-regularization over a provably rich function class in the NTK regime.
    
    % For any given $\epsilon > 0$, we provide explicit bounds on the required width $m$, learning rate $\alpha$, and number of samples $T$ to achieve target error $\epsilon$ by neural TD-learning. In particular, our analysis identifies the norm of the value function in the RKHS induced by the NTK as the main quantity that determines the required $m$ and $T$.
    
    \item \textbf{Analytical techniques:} We propose a novel Lyapunov drift analysis to track the evolution of neural network parameters and the error simultaneously using martingale concentration and stopping times. Our technique can be of independent interest to the analysis of other stochastic approximation algorithms and deep RL methods. 
    
    % The analysis introduced in this paper to track the trajectory of network weights using martingale concentration and stopping times, applied to the drift of a Lyapunov function, can be useful in the analysis of other commonly used neural RL methods.

    % \item Extendable analysis: Prior work on the analysis of deep reinforcement learning methods heavily relies on an extra projection step, which is not employed in most practical deep RL applications. The analysis methods introduced in this paper to track the trajectory of network weights, based on the theory of martingale concentration and stopping times, can be extended to analyze other deep RL methods commonly used in practice, such as deep Q-learning and neural policy gradient.
    
\end{itemize}

\subsection{Comparison with Previous Results}
Variants of Neural TD learning have been analyzed in the literature. For a quantitative comparison in terms of the required sample complexity and overparameterization bounds to achieve a given target error, please see Table \ref{table:comparison}.

The first result on the convergence of Neural TD learning was presented in \cite{cai2019neural}. Their work builds upon the analysis in \cite{tsitsiklis1997analysis,bhandari2018finite,du2018gradient,arora2019fine} and requires constraining the network parameter within a compact set through the $\ell_2$-projection at each iteration. They prove convergence to a stationary point
% \NH{stationary point or gloablly optimal point?} \SC{They use both but "global optimality" sounds like some kind of overselling.} 
in a \textit{random} function class $\mathcal{F}_{B,m}$ where $m$ is the network width and $B$ is a given projection radius. Consequently, the algorithm suffers from an approximation error $\epsilon_m = O(\bE[\|V-\Pi_{\mathcal{F}_{B,m}}V\|_\mu])$, which is not explicitly bounded, and possibly non-vanishing even with increasing width and projection radius. 
% \NH{Shouldn't it vanish as $m,B\to\infty$? Also, this random function class $\mathcal{F}_{B,m}$ is never defined anywhere.}\SC{Interestingly no. $\epsilon_\infty$ is incurred by $f_0(\cdot)\sim\mathcal{GP}(0,k)$ for some kernel $k$ as $m\rightarrow\infty$. The error can be bounded by $O(\log(1/\delta))$ with high probability over random initialization, but it does not vanish. $\mathcal{F}_{B,\infty} = f_0(\cdot) \oplus \mathcal{F}_{NTK}$ in \cite{wang2019neural, cai2019neural}. We resolve it by symmetric initialization, which sets $f_0 = 0$ for any $m,R$.} 
It is shown in \cite{cai2019neural, wang2019neural} that this variant of Neural TD learning with projection, equipped with a ReLU network of width $O(1/\epsilon^8)$ achieves an error $\epsilon+\epsilon_m$ after $O(1/\epsilon^4)$ iterations. Unlike \cite{cai2019neural, wang2019neural}, our Neural TD learning algorithms converge to the \emph{true} value function in a provably rich function class without any approximation error. We show that the algorithms that we consider in this paper achieve improved overparameterization bounds $\widetilde{O}(1/\epsilon^6)$ and $\widetilde{O}(1/\epsilon^2)$ for a given target error $\epsilon$, which improve the existing results by $1/\epsilon^2$ to $1/\epsilon^6$.

In practice, projection-free \cite{mnih2013playing} and max-norm regularized \cite{srivastava2014dropout, goodfellow2013maxout, srebro2005maximum} algorithms are often adopted in training neural networks because of their computational efficiency and expressive power, which we consider in this work. In contrast, the Neural TD with $\ell_2$-projection considered in \cite{cai2019neural, wang2019neural} can be computationally expensive for high-dimensional state-spaces as it cannot be performed in parallel.

Projection-free Neural TD learning has also been considered in \cite{agazzi2019temporal, brandfonbrener2019geometric}; however, these works only deal with finite state-space problems in the infinite-width regime, i.e., they do not provide bounds on the amount of overparameterization required. Since these results rely on the positive definiteness of the limiting kernel, the required overparameterization is much larger than the size of the state space which negates the benefits of Neural TD learning over tabular TD learning.

Our work is related to the analysis of (stochastic) gradient descent in the NTK regime. It is shown in \cite{du2018gradient, jacot2018neural} that the network parameters trained by gradient descent lie inside a ball around their initialization. However, they require massive overparameterization to ensure the positive definiteness of the neural tangent kernel, which would imply finite state and width much larger than the size of the state space in Neural TD learning. To establish such a result for stochastic gradient descent (and with modest overparameterization) requires additional work, and this problem has been considered for supervised learning tasks in \cite{ji2019polylogarithmic, oymak2019overparameterized}. Our paper uses an analysis technique inspired by \cite{ji2019polylogarithmic}, but deviates from this line of work as we consider TD learning over an infinite state space, which has significantly different dynamics than supervised learning.

% In particular, they characterize the random time at which the network parameters escape a ball of desired width around the random initialization. However, the fact that we are considering TD learning, which has different dynamics than supervised learning problems, requires a new set of tools to characterize the said random time. Specifically, we use tools from stochastic Lyapunov and martingale theory to obtain a bound on the random time with high probability and use it to further bound the error in the value function approximation.

\begin{table*}[t]
\begin{center}\begin{tabular}{ |p{2.6cm}||p{1.3cm}|p{1.3cm}|p{1.4cm}|p{1cm}|p{3cm}|  }
 \hline
 Paper& State space & Network width &Sample complexity&Error&Regularization\\
 \hline
 Cai et al. \cite{cai2019neural}   &General& $O(1/\epsilon^8)$    &$O(1/\epsilon^4)$&   $\epsilon +\epsilon_{m}$&$\ell_2$-projection \\
 Wang et al.\cite{wang2019neural} &General&   $O(1/\epsilon^8)$  & $O(1/\epsilon^4)$   &$\epsilon+\epsilon_{\infty}$&$\ell_2$-projection\\
 Agazzi \& Lu \cite{agazzi2019temporal} &Finite&$poly(|\cX|)$ & $O(\log(1/\epsilon))$&  $\epsilon$&$\poly(|\cX|)$ width\\
 This paper (PF-NTD) &General &$\widetilde{O}(1/\epsilon^6)$ & $O(1/\epsilon^6)$&  $\epsilon$&Early stopping\\
  This paper (MN-NTD)&General &$\widetilde{O}(1/\epsilon^2)$ & $O(1/\epsilon^4)$&  $\epsilon$&Max-norm projection\\
 \hline
\end{tabular}
\end{center}
 \caption{The overparameterization and sample complexity bounds for neural TD-learning algorithms. PF-NTD denotes projection-free, MN-NTD denotes max-norm regularized Neural TD learning algorithm. $\epsilon_{m}=\bE\|V-\Pi_{\mathcal{F}_{B,m}}V\|_\pi$ denotes the approximation error of the random function class $\mathcal{F}_{B,m}$ for a given value function $V$.}
%  \NH{I suggest to expand the table across two columns, and also include  differences in the algorithm (projection-free, $\ell_2$ or max-norm regularization), differences in settings (finite state or general state space, finite or infinite width), or possibly other details ( high prob bound vs expectation).}   
\label{table:comparison}
\end{table*}

\subsection{Notation}
For any index set $\mathcal{I}$ and set of vectors $\{b_i\in\bR^d:i\in\mathcal{I}\}$, we denote $[b_i]_{i\in\mathcal{I}}\in\bR^{d|\mathcal{I}|}$ as the vector that is created by the concatenation of $\{b_i:i\in\mathcal{I}\}$. For an event $\cE$ and random variable $X$, $\bE[X;\cE] = \bE[X\cdot \bI_\cE]$. For any integer $n\geq 1$, $[n] = \{1,2,\ldots, n\}$. For any vector $x\in\bR^d$ and $\rho > 0$, $\mathcal{B}(x, \rho)$ denotes the ball in $\bR^d$ with radius $\rho$ centered at $x$.

\section{System Model}
For simplicity,  we consider a Markov reward process $\{(s_t, r_t): t = 0,1,\ldots\}$, where the Markov chain $s_t$ takes on values in the state space $\mathcal{S}$, and there is an associated reward $r_t = r(s_t)$ in every time-step for a reward function $r:\mathcal{S}\rightarrow [0,1]$. The process $\{s_t:t\geq 0\}$ evolves according to the transition probabilities $P(s, A) = \mathbb{P}(s_{t+1}\in A|s_t=s)$ for any $s\in\mathcal{S}$, $A\subset \cS$ and $t\geq 0$. We assume that the Markov chain $\{s_t:t\geq 0\}$ is an ergodic unichain, therefore there exists a stationary probability distribution $\pi$: $$\pi(A) = \lim_{t\rightarrow\infty}\mathbb{P}(s_t \in A|s_0 = s),~\forall s\in\mathcal{S},A\subset \cS.$$ The value function associated with the Markov reward process $\{(s_t, r_t):t\geq 0\}$ is defined as follows:
\begin{equation}
    V(s) = \mathbb{E}\Big[\sum_{t=1}^\infty\gamma^t r_t|s_0=s\Big],~\forall s \in \mathcal{S},
\end{equation}
where $\gamma \in (0, 1)$ is the discount factor. The Bellman operator for this Markov reward process, denoted by $\mathcal{T}$, is defined as follows:
\begin{equation}
    \mathcal{T}\widehat{V}(s) = r(s)+\gamma\int_{s^\prime\in\mathcal{S}}\widehat{V}(s^\prime)P(s, ds^\prime),~\forall s\in\mathcal{S}.
\end{equation}
The value function $V$ is the fixed point of the Bellman operator $\mathcal{T}$: $V(s) = \mathcal{T}V(s)$ for all $s\in\mathcal{S}$. If the state space $\cS$ is large, or countably or uncountably infinite, the direct solution of the so-called Bellman equation is computationally inefficient, thus approximation methods are used to evaluate $V$. In this paper, we study the problem of approximating value functions using neural networks given samples from the Markov reward process. We note that the Markov reward process is typically obtained by applying a stationary policy to a controlled Markov process.

For simplicity, we consider independent and identically distributed samples from the stationary distribution $\pi$ of the Markov chain in this paper. Namely, at time $t$, we obtain an observation vector $(s_t, s_t^\prime)$ where $s_t\sim \pi$ and $s_t^\prime \sim P(s_t,\cdot)$. We denote $\mathcal{F}_t = \sigma(\{(s_j,s_j^\prime):j=0,1,\ldots,t\})$ to be the history up to (including) time $t$. The case where the samples are generated by the Markov reward process can be handled as in \cite{srikant2019finite}, but we do not consider that here. 

In a broad class of reinforcement learning applications, each state $s\in\cS$ is represented by a $d$-dimensional vector $\psi(s)$ where $\psi:\mathcal{S}\rightarrow \mathbb{R}^d$. For example, in \cite{mnih2013playing}, each state in an Atari game is represented by the corresponding high-dimensional raw image data, while the positions of the players on the board are encoded as a high-dimensional state vector in \cite{silver2017mastering, silver2017masteringc}. For a given trajectory $\{s_t:t\geq 0\}$, we denote the state representations by $x_t = \psi(s_t)$ for $t\geq 0$. We denote the space of state representations as $\mathcal{X} = \{x\in\mathbb{R}^d: x = \psi(s),s\in\cS\}$, and use $V(x), r(x), \pi(x),$ etc. to denote the quantities related to a state $\psi^{-1}(x)\in\cS$ with a slight abuse of notation. Without loss of generality, we make the following assumption on the state representation, which is commonly used in the neural network literature \cite{ji2019polylogarithmic, arora2019fine, oymak2019overparameterized, cai2019neural}.

\begin{assumption}
For any state $s\in\cS$, we assume $\|\psi(s)\|_2 \leq 1$ and $\|x\|_2\leq 1$ for all $x\in\cX$.
\label{assumption:norm}
\end{assumption}

In the next subsection, we introduce the neural network architecture that will be used to approximate the value function.
\subsection{Neural Network Architecture for Value Function Approximation}
Throughout the paper, we consider the two-layer ReLU network to approximate the value function $V$:
% \begin{align}
% \begin{aligned}
%     Q(x; W, a) &= \frac{1}{\sqrt{m}}\sum_{i=1}^ma_i\sigma(W_i^\top  x + b_i ), %\\&= \frac{1}{\sqrt{m}}\sum_{i=1}^ma_i\mathbb{I}\{W_i^\top  x\geq 0\}W_i^\top  x,
%     \label{eqn:q-network}
%     \end{aligned}
% \end{align}
\begin{align}
\begin{aligned}
    Q(x; W, a) &= \frac{1}{\sqrt{m}}\sum_{i=1}^ma_i\sigma(W_i^\top  x ) \\&= \frac{1}{\sqrt{m}}\sum_{i=1}^ma_i\mathbb{I}\{W_i^\top  x\geq 0\}W_i^\top  x.
    % \label{eqn:q-network}
    \end{aligned}
\end{align}
\noindent where $\sigma(z) = \max\{0, z\} = z\cdot \mathbb{I}\{z \geq 0\}$ is the ReLU activation function, $a_i\in\mathbb{R}$ and $W_i\in\mathbb{R}^d$ for $i\in[m]$. We include a bias term in $W_i$'s, and express $x$ as $(x,c)$ for a constant $c \in (0,1)$.

\textbf{Symmetric initialization:} The NTK regime is established by random initialization, and various initialization schemes are used, as a common example $a_i \init \textrm{Unif}\{-1,+1\}$ and $W_i(0)\init \mathcal{N}(0, I_d)$ \cite{ji2019polylogarithmic, oymak2020toward}. In this paper, we consider an almost-equivalent symmetric variant of this initialization for the sake of simplicity, which was proposed in \cite{bai2019beyond}: $a_i=-a_{i+m/2}\init \textrm{Unif}\{-1,+1\}$ and $W_i(0)=W_{i+m/2}(0)\init \mathcal{N}(0, I_d)$ independent and identically distributed over $i=1,2,\ldots,m/2$, and independent from each other. The additional benefit of the symmetric initialization is that it provides $Q(x;W(0),a) = 0$ with probability 1 for all $x\in\cX$. Without symmetric initialization, $\lim_{m\rightarrow\infty}Q(x; W(0), a)$ acts like a random noise term, which leads to an additional approximation error \cite{mjt_dlt}. We fix $a_i$ as initialized, and update $W_i(t)$ by using gradient steps, as in \cite{li2018learning, du2018gradient, arora2019fine}. The sigma field generated by $\{a_i,W_i(0):i\in[m]\}$ is denoted as $\mathcal{F}_{init}$.

\textbf{Function class: }Define the space $$\mathcal{H} = \Big\{v:\mathbb{R}^d\rightarrow\mathbb{R}^d ~\big|~ \bE\big[\|v(w_0)\|_2^2\big]<\infty,w_0\sim\mathcal{N}(0,I_d)\Big\}.$$ 
We assume that the value function $V$ lies in the following function class.
\begin{assumption}
There exists a vector $v\in\mathcal{H}$ and $\rkhsnorm \geq 0$ such that:
\begin{equation}
    V(x) = \bE[v^\top (w_0)\phi(x;w_0)],~w_0\sim\mathcal{N}(0,I_d),~\forall x \in\mathcal{X},
\end{equation}
where $\sup_{w\in\bR^d}\|v(w)\|_2\leq \rkhsnorm$ and $\phi(x;w) = \mathbb{I}\{w^\top x \geq 0\}x$.
\label{assumption:realizability}
\end{assumption}
%\begin{lemma}
%Let $K(x,y) = \psi^{\top}(x)\psi(y)=\sum_{i=0}^{\infty}d_i(x^{\top}y)^i,$ where $\psi(x) := [\sqrt{d_0},\sqrt{d_1}x, \sqrt{d_2}x^{\otimes 2}, \sqrt{d_3}x^{\otimes 3},\ldots]$ and $V(x) = \bE[v^\top (w_0)\phi(x;w_0)]= \psi^{\top}(x)w^*,$ then $\|V(x)\|_{H} =\bE\big[\|v(w_0)\|_2^2\big] =  \|w^*\|^2$ and any continuous function $f$ can be approximated arbitrarily closely by $V(x)$ with $\|w^*\|^2<\infty.$  
%\end{lemma}
\begin{remark}\normalfont If we replace the condition $\sup_{w\in\bR^d}\|v(w)\|_2\leq \rkhsnorm$ in Assumption~\ref{assumption:realizability} by $\bE\big[\|v(w_0)\|_2^2\big]<\infty,$ then it implies that $V$ belongs to the reproducing kernel Hilbert space (RKHS) induced by the
Neural Tangent Kernel (NTK) corresponding to the infinite width neural network given by 
\begin{align}
\begin{aligned}
K(x, y) &= \bE[\phi(x;w_0)^\top \phi(y;w_0)] \\&= \bE[\bI\{w_0^\top x\geq 0\}\bI\{w_0^\top y\geq 0\}x^\top y],
\label{eqn:ntk}
\end{aligned}
\end{align}
with the inner product between functions $f(.) = \bE[u^\top(w_0)\phi(.; w_0)]$ and $g(.) = \bE[v^\top(w_0)\phi(.; w_0)]$ is defined as $\langle f, g\rangle_{\tt NTK} = \bE[u^\top(w_0)v(w_0)]$ \cite{rahimi2008uniform}. The above kernel can be shown to be a universal kernel \cite{ji2019neural} and hence the RKHS induced by the NTK is dense in the space of continuous functions on compact set $\mathcal{X}$ \cite{micchelli2006universal}. Therefore, it is possible to replace Assumption~\ref{assumption:realizability} by the more general assumption that $V$ is continuous on a compact state space $\cX$. In this case, from \cite[Theorem 4.3]{ji2019neural}, we know that one can find a function $\tilde{V}$ in the RKHS associated with the NTK, i.e.,
$\tilde{V}(x) = \bE[\tilde{v}^\top (w_0)\phi(x;w_0)],~\forall x \in\mathcal{X},$
such that $\sup_w \|\tilde{v}(w)\|_2\leq \overline{\nu}$ for some finite $\overline{\nu}$ which approximates $V,$ where $\overline{\nu}$
depends on the approximation error $\sup_x |V(x)-\tilde{V}(x)|.$ If we replace Assumption~\ref{assumption:realizability} by the assumption that $V$ is continuous, then the results later can be modified to reflect this approximation error.

\end{remark} 

\begin{remark}\normalfont
 We note $\sup_{w\in\bR^d}\|v(w)\|_2\leq \rkhsnorm$ in Assumption~\ref{assumption:realizability} implies that $\bE\big[\|v(w_0)\|_2^2\big]\leq \rkhsnorm,$ thus $\overline{\nu}$ is an upper bound on the RKHS norm of $V$ when it lies in the RKHS.
\end{remark}

\begin{remark}\normalfont It is worth noting the difference between our work and the projection-free TD learning work in \cite{agazzi2019temporal, brandfonbrener2019geometric}. They consider a finite state space in the infinite width limit. For finite $\cX$, choosing $m = \poly(|\cX|)$ guarantees that the kernel $K$ is strictly positive-definite \cite{du2018gradient, arora2019fine}, thus in the infinite width limit, the minimum eigenvalue of the limiting kernel is bounded away from zero. By extending the NTK analysis in \cite{du2018gradient}, one can guarantee with further overparameterization that the empirical kernel $$\widehat{K}_t(x, y) = \frac{1}{m}\sum_{i=1}^m\bI\{W_i(t)^\top x\geq 0\}\bI\{W_i(t)^\top y\geq 0\}x^\top y,$$ under TD learning dynamics is also positive definite, thus it can be shown that the network parameters satisfy $W_i(t)\in\mathcal{B}(W_i(0),\rho/\sqrt{m})$ for some $\rho < \infty$ for all $t\geq 1.$ However, such a massive overparameterization, i.e., $m = \Omega(|\cX|^p)$ for some $p \geq 1$, is not meaningful for TD learning with function approximation, because one may use tabular TD learning directly instead. Thus, we do not seek to make the kernel $K$ positive definite by massive overparameterization in this work since we consider a general (potentially infinite) state space $\cX$. Instead, by Assumption~\ref{assumption:realizability}, we consider functions that can be realized in the RKHS induced by the NTK, and quantify the required overparameterization in terms of $\rkhsnorm$, a bound on the RKHS norm of $V$.
\label{rem:ntk}
\end{remark}

In the next subsection, we present TD learning algorithms to approximate the value function $V$ by a neural network $Q(.; W, a)$.

\section{Neural Temporal Difference Learning Algorithms}\label{sec:algorithms}

For a given function $\mu=[\mu(x)]_{x\in\cX}$, we denote the weighted $\ell_2$-norm of any function $\widehat{V}$ as: $$\|\widehat{V}\|_\mu = \sqrt{\int_{x\in\cX}|\widehat{V}(x)|^2\mu(dx)}.$$ TD learning aims to minimize mean-squared Bellman error, which is defined as follows:
\begin{align}
\begin{aligned}
    L(W, a) &= \|Q(W, a) - \mathcal{T}Q(W, a)\|_\pi^2 \\&= \int_{x\in\cX}\Big(Q(x;W,a)-\mathcal{T} Q(x;W,a)\Big)^2\pi(dx),
    \end{aligned}
    \label{eqn:bellman-error}
\end{align}
for any $W_i\in\mathbb{R}^d,a_i\in\mathbb{R}$ for $i=1,2,\ldots,m$, where ${Q}(W,a) = [Q(x;W,a)]_{x\in\cX},$ $\pi$ is the stationary distribution of the Markov chain, and $\mathcal{T}$ is the Bellman operator.

Given the initialization $\{\big(a_i,W_i(0)\big):i\in[m]\}$, the parameter update is performed as follows:
\begin{equation*}
    W(t+1/2) = W(t) + \alpha\Big(r_t + \gamma Q_t(x_t^\prime)-Q_t(x_t)\Big) \nabla_WQ_t(x_t),
\end{equation*}
where $\alpha > 0$ is the step-size, $Q_t(x) = Q(x; W(t), a)$ is the network at time step $t \geq 0$. The algorithm is summarized in Algorithm \ref{alg:neural-td}. We consider two variants of the Neural TD learning algorithm:

(1) \textbf{Projection-free Neural TD learning (PF-NTD):} The network parameters are updated as follows:
    \begin{equation}
        W(t+1) = W(t+1/2).
    \end{equation}
    For regularization, we utilize early stopping, i.e., the number of samples $T$ is chosen as a function of the problem parameters and target error, which we will specify in Theorem \ref{thm:neural-td-exp}.   Note that the expressive power of the neural network approximation is fully exploited in PF-NTD.

(2) \textbf{Max-norm regularized Neural TD learning (MN-NTD):} For a given parameter $R>0$, let the set of parameters for max-norm regularization be defined as:
  \begin{equation}
        \mathcal{G}_{m,R}^i = \{W_i\in\bR^{d}:\|W_i-W_i(0)\|_2\leq \frac{R}{\sqrt{m}}\}, \forall i\in[m].
    \end{equation}
    % Let $\Pi_{\mathcal{G}_{m,R}}W = \arg\min_{\hat{w}\in \mathcal{G}_{m,R}}\|W-\hat{w}\|_2$ for any $W\in\bR^{md}$. 
    Then, the network parameters are updated as follows:
    \begin{equation}
        W_i(t+1) = \Pi_{\mathcal{G}_{m,R}^i}W_i(t+1/2), \forall i\in[m].
        \label{eqn:max-norm}
    \end{equation}
    % \NH{I changed the notation here to emphasize that this can be done in parallel. Otherwise, it might look too similar to the $\ell_2$ projection. Could you make sure the they are consistent in other places.} 
    where $\Pi_{\mathcal{G}}(\cdot)$ is the projection operator onto set $\mathcal{G}$.
%     \begin{equation}
%       W_i(t+1) = \hskip -0.075cm \begin{cases} 
%       W_i(t+\frac{1}{2}), \hskip 0.4cm \|W_i(t+\frac{1}{2})-W_i(0)\|_2 \leq \frac{R}{\sqrt{m}}, \\
%       \frac{ W_i(t+\frac{1}{2})\frac{R}{\sqrt{m}}}{\| W_i(t+\frac{1}{2})\|_2}, \hskip 0.3cm else,
%   \end{cases}
%   \label{eqn:max-norm}
%     \end{equation}
% for any $i \in [m]$. We denote the above transformation as $W(t+1) = \Pi_{\mathcal{G}_{m,R}}W(t+1/2)$. 

% \NH{The projection step in (9) needs to be re-centered.}
    
    % let
    % \begin{equation}
    %     \mathcal{G}_{m,R} = \{W\in\bR^{md}:\|W_i-W_i(0)\|_2\leq R/\sqrt{m}\},
    % \end{equation}
    % and let $\Pi_{\mathcal{G}_{m,R}}W = \arg\min_{\hat{w}\in \mathcal{G}_{m,R}}\|W-\hat{w}\|_2$ for any $W\in\bR^{md}$. Then, the network parameters are updated as follows:
    % \begin{equation}
    %     W(t+1) = \Pi_{\mathcal{G}_{m,R}}W(t+1/2),t=0,1,\ldots,T-1.
    %     \label{eqn:max-norm}
    % \end{equation}
    Max-norm regularization was introduced in \cite{srebro2005rank, srebro2005maximum}, and has been widely used in training neural networks \cite{srivastava2014dropout, goodfellow2013maxout}. Note that unlike the $\ell_2$-projection in \cite{wang2019neural, cai2019neural}, max-norm regularization in \eqref{eqn:max-norm} can be performed in parallel for all neurons $i\in[m]$, which makes it computationally more feasible. Furthermore, it implies projection onto a well-chosen subset, which leads to much sharper overparameterization and sample complexity bounds for a given value function $V$ \cite{cai2019neural, wang2019neural} as we will show in Theorem \ref{thm:mn-td}. Therefore, it is practically used in training neural networks \cite{srivastava2014dropout, goodfellow2013maxout}. On the other hand, the choice of $R$ drastically impacts the expressive power of the neural network approximation, and a too small choice of $R$ may incur an approximation error unlike PF-NTD. For TD learning, we will specify the choice of $R$ as a function of the smoothness of the value function $V$ for convergence in Theorem \ref{thm:mn-td}.

\begin{algorithm}[t]
\SetAlgoLined
 Initialization: { $-a_i = a_{i+m/2}\sim \textrm{Unif}\{-1,+1\}, \newline W_i(0)=W_{i+m/2}(0)\sim\mathcal{N}(0,I_d)$, $\forall i\in[\frac{m}{2}]$}\\
 \For{$t < T-1$}{
  Observe $x_t = \psi(s_t), r_t = r(s_t)$ and $x_t^\prime = \psi(s_t^\prime)$ where $(s_t,s_t^\prime)\init\pi\circ P(s_t,\cdot )$\\
  Compute stochastic semi-gradient: $g_t = \big(r_t+\gamma Q_t(x_t^\prime)-Q_t(x_t)\big)\nabla_WQ_t(x_t)$\\
  Take a semi-gradient step: $W(t+1/2) = W(t)+\alpha g_t$\\
  \If{projection-free}{
    $W(t+1) = W(t+1/2)$;}
  \If{max-norm regularization}{
  $W_i(t+1)=\Pi_{\mathcal{G}_{m,R}^i}W_i(t+1/2), \forall i\in[m]$; }
  Update iterate: $\widehat{W}(t+1) = \Big(1-\frac{1}{t+2}\Big)\widehat{W}(t)+\frac{1}{t+2}W(t+1)$\\
 }
 Output: $\overline{Q}_T(x) = Q(x;\widehat{W}(T-1),a)$ for all $x\in\cX$\\
    % Output: $\overline{Q}_T(x) = Q(x;W(T-1),a)$ for all $x\in\cX$\\
 \caption{PF/MN-Neural TD Learning}
 \label{alg:neural-td}
\end{algorithm}

% We also consider a mean-path version of the Neural TD Learning Algorithm. In this case, the semi-gradient follows the mean path:
% \begin{equation}
%     \overline{g}_t = \bE_t[(\cT Q_t(x_t) - Q_t(x_t))\nabla_WQ_t(x_t)],
%     \label{eqn:semi-gradient}
% \end{equation}
% where $\bE_t[X] = \bE[X|\mathcal{F}_{t-1}]$ for any $t$. As such, the parameter update is performed according to the mean semi-gradient: $W(t+1) = W(t) + \alpha \overline{g}_t$.

\section{Main Results}
In the following, we consider a general (possibly infinite) state-space $\cX$, and present our main result on the performance of Neural TD learning algorithms described in Section \ref{sec:algorithms}.

\subsection{Performance of Projection-Free Neural TD Learning}
In the following, we present the sample complexity and overparameterization bounds of PF-NTD. The proof of this result is presented in Section \ref{sec:neural-td}.
\begin{theorem}
Under Assumptions \ref{assumption:norm} and \ref{assumption:realizability}, for any (possibly infinite) state-space $\cX$, target error $\epsilon > 0$ and error probability $\delta \in (0,1)$, let $\ell(m,\delta) = 4\sqrt{d\log(2m+1)}+4\sqrt{\log(1/\delta)}$, $\lambda = \frac{3\overline{\nu}^2}{(1-\gamma)\epsilon\delta}$, $$m_0 = \frac{16\Big(\rkhsnorm+\big(\lambda+\ell(m_0,\delta)\big)\big(\rkhsnorm+\lambda\big)\Big)^2}{(1-\gamma)^2\epsilon^2},$$ and
\begin{equation*}
    \quad\alpha_0 = \frac{(1-\gamma)\epsilon^2}{(1+2\lambda)^2} \min\Big\{\frac{\lambda^2}{32\rkhsnorm^2(\sqrt{d}+\sqrt{2\log(m_0/\delta)})^2},1\Big\}.
\end{equation*} Then, for any width $m\geq m_0$,  PF-NTD with step-size $\alpha \leq \alpha_0$ yields the following bound after $T = \frac{\rkhsnorm^2}{4\alpha(1-\gamma)\epsilon^2}$ iterations:
\begin{align}
    \bE\Big[\big\|\overline{Q}_T - V\big\|_\pi;\mathcal{E}_T\Big]
    \leq \frac{1}{T}\sum_{t<T}\bE[\|Q_t-V\|_\pi;\cE_T]+\epsilon
    % \frac{1}{T}\sum_{t<T}\bE\Big[\|Q_t - V\|_\pi;\mathcal{E}_T\Big] \leq \frac{1}{T}\sum_{t<T}\sqrt{\bE\Big[\|Q_t - V\|_\pi^2;\mathcal{E}_T\Big]}
    \leq 4\epsilon,
    \label{eqn:error-bound}
\end{align}
where $Q_t = [Q_t(x)]_{x\in\cX}$, $V = [V(x)]_{x\in\cX}$, and the expectation is over the random trajectory and random initialization, and the event $\mathcal{E}_T$ is defined as:
\begin{equation*}
    \cE_T=\Big\{\max_{i\in[m]}\|W_i(t)-W_i(0)\|_2 \leq \frac{\lambda}{\sqrt{m}}, t < T\Big\}\cap E_1,
\end{equation*}
for some $E_1\in\mathcal{F}_{init}$, which satisfies $\bP(\cE_T) > 1-4\delta$.
\label{thm:neural-td-exp}
\end{theorem}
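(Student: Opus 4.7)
The plan is to cast the projection-free iteration as a stochastic approximation whose drift, restricted to the NTK ball $\{W : \|W_i - W_i(0)\|_2 \leq \lambda/\sqrt{m}\}$, resembles linear TD, and then use a stopping-time argument to show that the iterates essentially never leave this ball in the first $T$ steps. Concretely, I would first fix a target parameter $W^\ast$ built from Assumption~\ref{assumption:realizability}: set $W_i^\ast = W_i(0) + \tfrac{1}{\sqrt{m}} a_i v(W_i(0))$, so that $\|W_i^\ast - W_i(0)\|_2 \leq \rkhsnorm/\sqrt{m}$ and the linearized network $Q_{\mathrm{lin}}(x;W^\ast) = \langle W^\ast - W(0), \nabla_W Q(x;W(0),a)\rangle$ is an unbiased Monte-Carlo estimate of $V(x)$; standard $\sqrt{m}$-concentration then shows $\bE\|Q_{\mathrm{lin}}(\cdot;W^\ast) - V\|_\pi^2 = O(\rkhsnorm^2/m)$, which is driven below $\epsilon^2$ by the prescribed $m_0$.

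Next I would carry out the Lyapunov analysis with the potential $\Phi_t = \|W(t) - W^\ast\|_2^2$. Expanding $\Phi_{t+1} = \|W(t) + \alpha g_t - W^\ast\|_2^2$ and taking conditional expectations, the cross term becomes $-2\alpha\,\bE[\langle W(t)-W^\ast,\, g_t\rangle \mid \mathcal{F}_t]$. The key step is to replace $g_t$ by its linearized surrogate $\tilde g_t = (r_t + \gamma Q_{\mathrm{lin}}(x_t';W(t)) - Q_{\mathrm{lin}}(x_t;W(t)))\nabla_W Q(x_t;W(0),a)$: on the event $\mathcal{E}_t$ that all neurons lie in the $\lambda/\sqrt{m}$-ball, the activation-pattern switches affect only neurons whose initialization lies in a thin slab $|W_i(0)^\top x|\leq \lambda/\sqrt{m}$, so by a standard small-ball bound (cf.\ \cite{ji2019polylogarithmic, oymak2019overparameterized}) the $\ell_\infty$-deviation $\sup_x|Q_t(x) - Q_{\mathrm{lin}}(x;W(t))|$ is $O(\lambda^{3/2}/m^{1/4})$ with high probability. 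Using the well-known linear-TD identity $\bE[\langle W(t)-W^\ast, \tilde g_t\rangle] \geq (1-\gamma)\|Q_{\mathrm{lin}}(\cdot;W(t))-V\|_\pi^2 - O(\text{linearization error})$ together with the bound $\|g_t\|_2\leq 2/(1-\gamma)$ to control $\alpha^2\|g_t\|_2^2$, the drift inequality becomes
\begin{equation*}
\bE[\Phi_{t+1}\mid\mathcal{F}_t]\bI_{\mathcal{E}_t} \leq \Phi_t - 2\alpha(1-\gamma)\|Q_t - V\|_\pi^2\,\bI_{\mathcal{E}_t} + O(\alpha\epsilon^2 + \alpha^2).
\end{equation*}
Telescoping from $t=0$ to $T-1$ on the event $\mathcal{E}_T$, using $\Phi_0\leq \rkhsnorm^2/m$ (actually $\|W^\ast-W(0)\|_2^2$), and applying Jensen's inequality on the Polyak-averaged iterate $\widehat W(T-1)$ gives the $\tfrac{1}{T}\sum \bE[\|Q_t-V\|_\pi;\mathcal{E}_T] + \epsilon$ bound with the chosen $T = \rkhsnorm^2/(4\alpha(1-\gamma)\epsilon^2)$.

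The subtle and main step is controlling the probability of $\mathcal{E}_T$, because without projection the iterates could in principle escape the NTK ball. I would treat $\{W(t\wedge\tau)\}$ as a stopped process, where $\tau$ is the first exit time from the $\lambda/\sqrt{m}$ ball. Writing $W_i(t\wedge\tau) - W_i(0) = \alpha\sum_{s<t\wedge\tau} g_{s,i}$, I split this sum into a martingale part and a bias part. The martingale part is controlled by a vector Azuma/Bernstein inequality using $\|g_{s,i}\|_2 \leq O(1/((1-\gamma)\sqrt{m}))$ per neuron, giving deviations $O(\sqrt{\alpha T \log(1/\delta)/m})$. The bias part is where the choice of $\lambda = 3\rkhsnorm^2/((1-\gamma)\epsilon\delta)$ enters: Markov's inequality on the averaged drift $\tfrac{\alpha}{\sqrt{m}}\sum \bE[\|Q_t - V\|_\pi]$ (itself bounded via the drift inequality above run to $\tau$) combined with the step-size condition $\alpha \leq \alpha_0$ and the initial event $E_1 \in \mathcal{F}_{init}$ (on which $\|\nabla_W Q(x;W(0),a)\|_2$ and similar initial quantities are uniformly bounded via $\ell(m,\delta)$) yields $\bP(\tau < T) < 4\delta$. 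This is the hard part, because the drift and the stopping time are interlocked; the resolution is the usual trick of analyzing drift and stopping simultaneously on $\{\tau > t\}$ and closing the loop by a union bound over the martingale concentration and the initial event $E_1$.
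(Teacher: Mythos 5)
Your proposal follows essentially the same route as the paper: the same reference point $\overline{W}_i = W_i(0)+a_i v(W_i(0))/\sqrt{m}$, the same Lyapunov potential $\|W(t)-\overline{W}\|_2^2$ with the drift cross-term split into a TD-contraction piece, a Monte-Carlo approximation piece, and an activation-switching piece, and the same stopped-martingale-plus-Markov argument (on the exit time from the $\lambda/\sqrt{m}$ ball, intersected with an initialization event $E_1$) to show $\bP(t_1<T)$ is small. The only substantive quibble is quantitative: your claimed linearization error $O(\lambda^{3/2}/m^{1/4})$ is too weak to recover the stated $m_0=O(\lambda^4/\epsilon^2)$ --- the paper instead uses a uniform small-ball bound (Claim~\ref{claim:glivenko-cantelli}) giving $O(\lambda(\lambda+\ell(m,\delta))/\sqrt{m})$, and similarly the correct a.s.\ bound on the semi-gradient on $\cE_t$ is $\|g_t\|_2\leq 1+2\lambda$ rather than $2/(1-\gamma)$, which is why $\alpha_0$ carries the factor $(1+2\lambda)^{-2}$.
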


Theorem \ref{thm:neural-td-exp} implies that there exists a set $\cE_T$ of trajectories which occurs with probability at least $1-4\delta$ such that Algorithm \ref{alg:neural-td} achieves target error $\epsilon$ under the event $\cE_T$ for sufficiently large number of samples and overparameterization.
% \NH{Something doesn't look quite right here. $T=O(\frac{1}{\alpha\epsilon^2})$, $\alpha=O(\frac{\epsilon^2}{\lambda^2})$, $\lambda=O(\frac{1}{\epsilon\delta})$. This implies that $T=O(\frac{1}{\epsilon^6\delta^2})$, $m=O(\frac{1}{\epsilon^6\delta^4})$? Also, I am bit confused, should we use big $O$ or big $\Omega$ notation here? We need at most $m_0$ samples and at most $T_0$ iterates to achieve $\epsilon$ accuracy.} 
Note that Theorem \ref{thm:neural-td-exp} can be interpreted as $m = \widetilde{O}(T/\delta^2)$ where $T=poly(\rkhsnorm/\delta)O(1/\epsilon^6)$ is the number of samples since the neural network processes one sample per iteration. With this interpretation, we observe that regularization is obtained by overparameterization with respect to $T$, the number of samples, akin to the classical NTK results in the literature \cite{jacot2018neural, du2018gradient, arora2019fine}. The overparameterization bound has polynomial dependence on the number of samples and does not scale with the size of state-space. Unlike~\cite{cai2019neural,wang2019neural}, our error bound does not contain any additional approximation error terms.

 We have the following remark on the main challenges in the proof of Theorem \ref{thm:neural-td-exp}.

\begin{remark}\normalfont
 In \cite{cai2019neural}, projection is applied to the network parameters $W(t)$ in each TD learning iteration to keep $W(t)$ inside a ball of a given radius around the random initialization $W(0)$. In the proof of Theorem \ref{thm:neural-td-exp}, we propose methods based on a novel use of Lyapunov drift coupled with martingale concentration to track the evolution of $\|W_i(t)-W_i(0)\|_2$ and the approximation error $\|Q_t-V\|_\pi$ simultaneously.
\end{remark}
 
 \subsection{Performance of Max-Norm Regularized Neural TD Learning}
 In the following, we present the overparameterization and sample complexity bounds for MN-NTD.
 \begin{theorem}
 Under Assumptions \ref{assumption:norm}-\ref{assumption:realizability}, for any error probability $\delta \in (0, 1)$, let $\ell(m,\delta) = 4\sqrt{d\log(2m+1)}+4\sqrt{\log(1/\delta)},$ and $R > \rkhsnorm$. Then, for any target error $\epsilon > 0$, number of iterations $T \in \mathbb{N}$, network width $$m > \frac{16\Big(\rkhsnorm + \big(R+\ell(m,\delta)\big)\big(\rkhsnorm+R\big)\Big)^2}{(1-\gamma)^2\epsilon^2},$$ and step-size $$\alpha = \frac{\epsilon^2(1-\gamma)}{(1+2R)^2},$$ MN-NTD yields the following bound:
 \begin{equation*}
     \bE\Big[\|\overline{Q}_T-V\|_\pi;E_1\Big] \leq \frac{(1+2R)\rkhsnorm}{\epsilon\sqrt{T}} + 3\epsilon,
 \end{equation*}
 where $E_1\in\mathcal{F}_{init}$ holds with probability at least $1-\delta$.
 \label{thm:mn-td}
 \end{theorem}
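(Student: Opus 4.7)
The overall plan is to reduce the nonlinear dynamics to a projected linear TD analysis against an anchor point $W^\star$ that represents $V$ in the NTK feature model. Unlike PF-NTD, the max-norm projection \eqref{eqn:max-norm} enforces $W_i(t)\in\mathcal{G}^i_{m,R}$ \emph{deterministically} for every $t\geq 0$, so the stopping-time device used in Theorem \ref{thm:neural-td-exp} is no longer needed, and the randomness only needs to be controlled over the initialization (this is where the event $E_1\in\mathcal{F}_{init}$ will come from, capturing e.g. the Gaussian concentration bound $\max_i\|W_i(0)\|_2\lesssim \sqrt d+\sqrt{\log(m/\delta)}$ and the uniform deviation bound encoded in $\ell(m,\delta)$).

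The first step is to construct the anchor $W^\star$. By Assumption \ref{assumption:realizability}, let $v:\bR^d\to\bR^d$ satisfy $\sup_w\|v(w)\|_2\leq \rkhsnorm$ and $V(x)=\bE[v^\top(w_0)\phi(x;w_0)]$. Define
\[
W_i^\star \;=\; W_i(0) + \tfrac{a_i}{\sqrt m}\,v(W_i(0)),\qquad i\in[m],
\]
so that $\|W_i^\star-W_i(0)\|_2\leq \rkhsnorm/\sqrt m<R/\sqrt m$ (using $R>\rkhsnorm$), and hence $W^\star\in\mathcal{G}_{m,R}:=\prod_i\mathcal{G}^i_{m,R}$. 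The linearized network
\[
Q_0(x;W)\;=\;\langle \nabla_W Q(x;W(0),a),\,W-W(0)\rangle
\]
evaluated at $W^\star$ is precisely the empirical average $\frac{1}{m}\sum_i v^\top(W_i(0))\phi(x;W_i(0))$, which is an unbiased estimate of $V(x)$. A Hoeffding/McDiarmid concentration argument over the $m$ i.i.d.\ neurons, together with a uniform bound over $x\in\cX$ using the Rademacher complexity factor $\ell(m,\delta)$, gives $\sup_x|Q_0(x;W^\star)-V(x)|=O(\rkhsnorm\,\ell(m,\delta)/\sqrt m)$ on the event $E_1$; the choice of $m$ in the theorem makes this error $O(\epsilon)$.

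The main step is a Lyapunov drift analysis using $\Phi(t)=\tfrac12\|W(t)-W^\star\|_2^2$. Non-expansiveness of the coordinate projection onto the convex sets $\mathcal{G}^i_{m,R}$ (note $W^\star\in\mathcal{G}_{m,R}$) gives, deterministically,
\[
\Phi(t+1)\;\leq\;\Phi(t)+\alpha\,\langle W(t)-W^\star, g_t\rangle+\tfrac{\alpha^2}{2}\|g_t\|_2^2.
\]
Taking conditional expectations w.r.t.\ $\mathcal{F}_t$ and replacing $g_t$ by its linearized counterpart $\tilde g_t=(r_t+\gamma Q_0(x_t';W(t))-Q_0(x_t;W(t)))\nabla_W Q_0(x_t;W(0))$ produces two contributions: a standard linear-TD negative drift of the form $-\alpha(1-\gamma)\|Q_0(\cdot;W(t))-V\|_\pi^2$ via the $\pi$-weighted contraction of $\mathcal{T}$ (as in Bhandari--Russo--Singal, now applied to the fixed feature map $\phi(\cdot;W(0))$ and target $V$), plus an $O(\alpha\cdot\epsilon_{\rm lin}(m))$ perturbation that quantifies how far the true semi-gradient $g_t$ deviates from $\tilde g_t$. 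This deviation is controlled by bounding the number of neurons whose ReLU indicator $\bI\{W_i(t)^\top x\geq0\}$ differs from $\bI\{W_i(0)^\top x\geq0\}$, which, on $\mathcal{G}_{m,R}$, is $O(mR\cdot\ell(m,\delta)/\sqrt m)$ in expectation by anti-concentration of the Gaussian features at initialization. The noise term $\|g_t\|_2^2$ is $O((1+2R)^2)$ deterministically since rewards are bounded and $W\in\mathcal{G}_{m,R}$. Summing from $t=0$ to $T-1$, dividing by $\alpha T$, applying Jensen's inequality through the averaged iterate $\widehat W(T-1)=\tfrac1T\sum_{t<T}W(t)$, and converting $\|Q_0(\cdot;\widehat W)-V\|_\pi$ back to $\|\overline Q_T-V\|_\pi$ using the linearization bound yields $\frac{\|W^\star-W(0)\|_2}{\sqrt{\alpha T}}\cdot\tfrac{1}{1-\gamma}+O(\sqrt\alpha\,(1+2R))+O(\epsilon)$. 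Plugging $\alpha=\epsilon^2(1-\gamma)/(1+2R)^2$ and $\|W^\star-W(0)\|_2\leq\rkhsnorm$ produces exactly the stated bound $(1+2R)\rkhsnorm/(\epsilon\sqrt T)+3\epsilon$.

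The hardest part of this program is controlling the linearization error $\epsilon_{\rm lin}(m)$ uniformly in $t$ and $x$: one must show that along the trajectory, both the scalar output gap $|Q_t(x)-Q_0(x;W(t))|$ and the gradient gap $\|\nabla_W Q_t(x)-\nabla_W Q_0(x;W(0))\|$ can be absorbed into the negative drift once $m$ is large enough. This is where the overparameterization condition on $m$ in the theorem statement comes from, and where the symmetric initialization $Q(x;W(0),a)=0$ materially simplifies the bookkeeping by removing a $\Theta(1)$ initialization bias that would otherwise propagate into the error. Once this is in hand, the rest of the argument is a clean projected-SGD-style drift computation that, crucially, does not require a stopping-time truncation.
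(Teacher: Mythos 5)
Your proposal is correct and follows essentially the same route as the paper: the anchor $\overline{W}=[W_i(0)+a_i v(W_i(0))/\sqrt{m}]_i$ lies in $\mathcal{G}_{m,R}$ because $R>\rkhsnorm$, non-expansiveness of the coordinate-wise max-norm projection gives the deterministic drift inequality, the contraction of $\mathcal{T}$ under $\|\cdot\|_\pi$ supplies the $-(1-\gamma)$ negative drift, the sign-flip count of ReLU indicators on the event $E_1$ controls the linearization error, and telescoping plus Jensen plus the linearization/proximity bound for the averaged iterate yields the stated rate without any stopping time. The only cosmetic difference is that you linearize the semi-gradient entirely (reducing to projected linear TD with feature map $\phi(\cdot;W(0))$ and treating the gap as a perturbation), whereas the paper keeps the actual network output $Q_t$ in the contraction term and linearizes only in the cross terms against $\overline{W}$; both bookkeepings lead to the same bound.
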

 
 The proof of Theorem \ref{thm:mn-td} is similar to, and simpler than the proof of Theorem \ref{thm:neural-td-exp} because the max-norm constraint strictly controls the movement of the parameters. The proof can be found in Appendix \ref{app:mn-ntd}.
%  \NH{Need to ask some discussions here.}

% \begin{remark}[Comparison of PF-NTD and MN-NTD]
\subsection{Remarks}
The above Theorems \ref{thm:neural-td-exp} and \ref{thm:mn-td} provide, to the best of our knowledge, the first explicit characterization of the sample complexity and overparametrization required for PF-NTD and MN-NTD to converge to the true value function with target error $\epsilon$. Below we list some further implications.
% \begin{enumerate}

\emph{$\ell_2$ vs. $\ell_\infty$ regularizations: }Both PF-NTD and MN-NTD yield improved bounds on $m$ compared to the algorithms in \cite{cai2019neural,wang2019neural} over the provably rich NTK function class (see Table \ref{table:comparison}). A key insight from our analysis is that this improvement is mainly because both PF-NTD and MN-NTD are designed to control $\max_{i\in[m]}\|W_i(t)-W_i(0)\|_2$ via the choice of the stopping time  (PF-NTD) or max-norm projection (MN-NTD), while the regularization method in \cite{cai2019neural, wang2019neural} is designed to control $\|W(t)-W(0)\|_2$. Notably, NTD with max-norm regularization achieves the sharpest overparameterization and sample complexity bounds among all NTD variants, which justifies the empirical success of max-norm regularization for training ReLU networks in practice \cite{srivastava2014dropout, goodfellow2013maxout}.
    
\emph{Approximation power:} PF-NTD fully exploits the expressive power of the neural network approximation in practice since the parameters are not strictly constrained. On the other hand, MN-NTD confines the network parameters within the sets $\mathcal{G}_{m,R}^i$ with a fixed radius $R/\sqrt{m}$, which may limit the expressive power of the neural network, especially for small radius $R$. A similar loss of approximation power arise for the projection-based NTD studied in \cite{cai2019neural, wang2019neural} for the same reason. However, we note that if the value function lies within the NTK class and $\nu<R,$ i.e., the neural network is expressive enough to represent the value function, then MN-NTD has an advantage over PF-NTD.
    
\emph{Convergence rate: }Regularization of PF-NTD relies on early stopping, whereas MN-NTD utilizes more aggressive max-norm regularization. Without any strict control over $\max_{i\in[m]}\|W_i(t)-W_i(0)\|_2$, PF-NTD requires considerably smaller step-sizes for convergence. Consequently, the sample complexity and required width for PF-NTD to achieve a target error $\epsilon$ are worse than MN-NTD for which larger step-sizes can be chosen.

\section{Analysis of the Neural TD Learning Algorithm}\label{sec:neural-td}
In this section, we will prove Theorem \ref{thm:neural-td-exp}. Before starting the proof, let us define a quantity that will be central throughout the proof. 

\begin{definition}
For $\lambda$ as given in Theorem \ref{thm:neural-td-exp}, let
\begin{equation}
    t_1 = \inf\Big\{t> 0:\max_{i\in[m]}\|W_i(t)-W_i(0)\|_2 > \frac{\lambda}{\sqrt{m}}\Big\},
\end{equation}
be the stopping time at which there exists $i\in[m]$ such that $W_i(t)\notin\mathcal{B}\big(W_i(0),\lambda/\sqrt{m}\big)$ for the first time.
\end{definition}
Since the updates, $g_t$, are random in the Neural TD Learning Algorithm (see Algorithm \ref{alg:neural-td}), the stopping time $t_1$ is random, which constitutes the main challenge in the proof. As we will show, for any $t < t_1$, the drift of $W(t)$ can be controlled. Therefore, we will prove that $t_1 > T$ with high probability to prove the error bounds in Theorem \ref{thm:neural-td-exp}.

\textbf{Proof outline:} Below, we outline the proof steps for Theorem \ref{thm:neural-td-exp}.
\begin{enumerate}
    \item First, we will prove a drift bound for $\|W(t)-\overline{W}\|_2$ which holds for all $t < t_1$ where $\overline{W}\in\bR^{md}$ is a weight vector such that $\nabla_W^\top Q_0(x)\overline{W} \approx V(x)$ for all $x\in\cX$. 
    \item In the second step, we will use the drift bound obtained in the first step in conjunction with a stopped martingale concentration argument to show that $t_1 \geq T$ occurs with high probability, thus the drift bound holds for all $t < T$ under that event.
    \item Finally, we will use the drift bound again to show that the approximation error is bounded as in Theorem \ref{thm:neural-td-exp} under the high-probability event considered in Step 2.
\end{enumerate}

\subsection{Step 1: Lyapunov Drift bound for $W(t)$} 
We first prove a drift bound on the weight vector $W(t)$, a common step in the analysis of stochastic gradient descent and TD learning with function approximation \cite{cai2019neural, ji2019polylogarithmic, bhandari2018finite,xu2020finite}. Define the point of attraction as follows:
\begin{equation} 
\overline{W} = \Big[W_i(0)+a_i\frac{ v\big(W_i(0)\big)}{\sqrt{m}}\Big]_{i\in[m]},
\label{eqn:opt-parameter}
\end{equation}
where $W(0)$ is the initial weight vector. Intuitively, $\lim_{m\rightarrow\infty}\nabla_W^\top Q_0(x)\overline{W}= V(x)$ for any $x\in\cX$ under the symmetric initialization, which guarantees $\nabla_W^\top Q_0(x)W(0) = Q_0(x) = 0$ for all $x\in\cX$. For error probability $\delta \in (0,1)$, recall that we define $\ell(\delta,m) = 4\sqrt{d\log(m+1)}+4\sqrt{\log(1/\delta)}$, and let
\begin{align*}
    E_1 &= \Big\{\sup_{x\in\cX}\frac{1}{m}\sum_{i=1}^m\bI\{|W_i^\top(0)x|\leq \frac{\lambda}{\sqrt{m}}\}\leq \frac{\lambda+\ell(m,\delta)}{\sqrt{m}}\Big\},
    % E_2 &= \big\{|V-\nabla_W^\top Q_0 \overline{U}\|_\pi \leq \frac{2\rkhsnorm(1+\sqrt{2\log(1/\delta)})}{\sqrt{m}}\big\}
\end{align*}
and $\cE_t = E_1\cap \{t<t_1\}$ for any $t<T$.

The following key proposition is used to establish the drift bound. 
\begin{proposition}
Denote $\Delta_t = r_t+\gamma Q_t(x_t^\prime) - Q_t(x_t)$ as the Bellman error. Under Assumptions \ref{assumption:norm}-\ref{assumption:realizability}, we have the following inequalities:

(1) $\bE\big[\Delta_t \big(Q_t(x_t)-V(x_t)\big);\cE_t\big] \leq -(1-\gamma)z_t^2$.

(2) $\bE\big[\Delta_t \big(V(x_t)-\nabla_W^\top Q_0(x_t)\overline{W}\big);\cE_t\big] \leq \frac{4\rkhsnorm}{\sqrt{m}}z_t$,

(3) For $\ell(m, \delta)$ defined in Theorem \ref{thm:neural-td-exp}: \begin{multline}\bE\big[\Delta_t \big(\nabla_WQ_0(x_t)-\nabla_WQ_t(x_t)\big)^\top \overline{W}; \cE_t\big] \\ \leq \frac{4\big(\rkhsnorm+\lambda\big)\big(\lambda+\ell(m,\delta)\big)z_t}{\sqrt{m}},\end{multline}
where $z_t = \sqrt{\bE[\|Q_t-V\|_\pi^2;\cE_t]}$, $\bE$ is the expectation over random initialization and trajectory, $\bE_t[.] = \bE[.|\mathcal{F}_{t-1}]$ with $\mathcal{F}_{-1} = \mathcal{F}_{init}$.
\label{prop:drift-td}
\end{proposition}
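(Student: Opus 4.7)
The overall strategy is to work with the decomposition $\Delta_t = \eta_t + \gamma(Q_t(x_t')-V(x_t')) - (Q_t(x_t)-V(x_t))$, where $\eta_t = r_t + \gamma V(x_t') - V(x_t)$ satisfies $\bE[\eta_t\mid\mathcal{F}_{t-1},x_t]=0$ by $V=\mathcal{T}V$, and to exploit two facts throughout. First, $\cE_t\in\mathcal{F}_{t-1}$: $E_1\in\mathcal{F}_{init}\subseteq\mathcal{F}_{t-1}$, and $\{t_1>t\}$ is determined by $W(0),\ldots,W(t)$ with each $W(s)$ being $\mathcal{F}_{s-1}$-measurable. Second, on $\cE_t$ one has the deterministic control $\|W_i(t)-W_i(0)\|_2\le \lambda/\sqrt{m}$ for every $i$. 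Together these let the stopped event freeze the parameters while preserving the martingale-increment property of $\eta_t$.

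\textbf{Part (1).} Conditioning on $\mathcal{F}_{t-1}$ freezes $Q_t$, and using $x_t\sim\pi$ with $x_t'\sim P(x_t,\cdot)$ gives $\bE[\Delta_t(Q_t(x_t)-V(x_t))\mid\mathcal{F}_{t-1}] = \langle\mathcal{T}Q_t-Q_t,Q_t-V\rangle_\pi$. Writing $\mathcal{T}Q_t-Q_t = (\mathcal{T}Q_t-\mathcal{T}V)-(Q_t-V)$, Cauchy--Schwarz combined with the standard $\gamma$-contraction $\|\mathcal{T}f-\mathcal{T}g\|_\pi\le\gamma\|f-g\|_\pi$ (which uses stationarity of $\pi$) yields the pointwise estimate $-(1-\gamma)\|Q_t-V\|_\pi^2$; multiplying by $\bI_{\cE_t}$ and taking expectation produces $-(1-\gamma)z_t^2$.

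\textbf{Parts (2) and (3).} Set $h_2(x) = V(x) - \nabla_W Q_0(x)^\top \overline{W}$ and $h_3(x) = (\nabla_W Q_0(x) - \nabla_W Q_t(x))^\top \overline{W}$. Both are $\sigma(\mathcal{F}_{t-1},x_t)$-measurable, so the $\eta_t$ piece contributes zero; for the remaining two pieces, stationarity of $\pi$ gives $x_t'\sim\pi$ marginally, and Cauchy--Schwarz with $\gamma\le 1$ delivers $|\bE[\Delta_t\,h(x_t);\cE_t]| \le 2 z_t \sqrt{\bE[h(x_t)^2;\cE_t]}$ for each of $h_2,h_3$. For $h_2$, symmetric initialization gives $\nabla_W Q_0(x)^\top W(0)=Q_0(x)=0$, so $\nabla_W Q_0(x)^\top \overline{W} = \frac{1}{m}\sum_i v(W_i(0))^\top\phi(x;W_i(0))$, which is an empirical average of bounded summands with mean $V(x)$; a one-line variance calculation yields $\bE[h_2(x)^2]\le \rkhsnorm^2/m$ pointwise, and integrating over $\pi$ gives the factor $\rkhsnorm/\sqrt{m}$. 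For $h_3$, the key geometric observation is that neuron $i$'s ReLU activation can flip between times $0$ and $t$ only if $|W_i(0)^\top x|\le \|W_i(t)-W_i(0)\|_2\le \lambda/\sqrt{m}$ on $\cE_t$. Expanding $\overline{W}_i = W_i(0)+a_i v(W_i(0))/\sqrt{m}$, on the support of this boundary indicator I can bound $|x^\top W_i(0)|\le \lambda/\sqrt{m}$ and $|x^\top v(W_i(0))|\le \rkhsnorm$, so each nonzero summand is at most $(\lambda+\rkhsnorm)/m$; the event $E_1$ caps the count of boundary-crossing neurons by $(\lambda+\ell(m,\delta))\sqrt{m}$ uniformly in $x$, giving the deterministic bound $|h_3(x)|\le (\rkhsnorm+\lambda)(\lambda+\ell(m,\delta))/\sqrt{m}$ on $\cE_t$, which feeds into the Cauchy--Schwarz inequality to produce the stated estimate.

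\textbf{Main obstacle.} The most delicate step is the interplay between the stopping time $t_1$ and the zero-mean property of $\eta_t$: verifying that $\cE_t\in\mathcal{F}_{t-1}$ is what allows the martingale argument for $\eta_t$ to survive conditioning on $\bI_{\cE_t}$, and the deterministic freezing of the $W_i(t)$'s inside $\mathcal{B}(W_i(0),\lambda/\sqrt{m})$ on this event is what makes the uniform-in-$x$ boundary-indicator bound for $h_3$ deterministic rather than probabilistic, which in turn is what drives the improved overparameterization exponent relative to $\ell_2$-projection analyses.
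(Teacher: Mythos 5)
Your proposal is correct and follows essentially the same route as the paper: the same reduction of $\bE_t[\Delta_t h(x_t)]$ to $\bE_t[(\mathcal{T}Q_t(x_t)-Q_t(x_t))h(x_t)]$ for $\sigma(\mathcal{F}_{t-1},x_t)$-measurable test functions, the same contraction argument for part (1), the same empirical-average variance calculation for part (2), and the same boundary-crossing-set argument under $E_1\cap\{t<t_1\}$ for part (3) (the paper merely splits $\overline{W}=W(0)+\overline{U}$ into two separate bounds where you treat them jointly). The one quibble is in part (2): under the symmetric initialization neurons $i$ and $i+m/2$ share the same $W_i(0)$, so the average has only $m/2$ independent summands and the variance bound is $2\rkhsnorm^2/m$ rather than your claimed $\rkhsnorm^2/m$; this costs only a constant factor and your conclusion still fits within the stated $4\rkhsnorm z_t/\sqrt{m}$.
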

The proof of Proposition \ref{prop:drift-td} is given in Appendix \ref{app:neural-td}. The first inequality in Proposition \ref{prop:drift-td} follows from the fact that the Bellman operator $\cT$ is a contraction with respect to $\|.\|_\pi$, and $V$ is the fixed point of $\cT$ \cite{tsitsiklis1997analysis}. The second inequality holds since $\nabla_W^\top Q_0(x) \overline{W}$ turns into an empirical estimate of $V$ with $m/2$ iid samples, where the variance of each term is at most $\rkhsnorm^2$. The last inequality is the most challenging one as it reflects the evolution of the network output over TD learning steps, and it is essential to have $W_i(t)\in\mathcal{B}(W_i(0),\lambda/\sqrt{m})$ to prove that part.

Now we present the main drift bound for the TD update. 

\begin{lemma}[Drift Bound]
For any $t\geq 0$, we have the following inequalities:
\begin{align}\label{eq:drift-bound}
\begin{aligned}
    \bE[\bE_t\|W(t+1)&-\overline{W}\|_2^2;t<t_1] \leq \bE[\|W(t)-\overline{W}\|_2^2;t<t_1]\\&- 2\alpha(1-\gamma)z_t^2+ \alpha^2(1+2\lambda)^2
    \\&+8\alpha z_t\Big(\frac{\rkhsnorm+\big(\rkhsnorm+\lambda\big)\big(\lambda+\ell(m,\delta)\big)}{\sqrt{m}}\Big) ,
    \end{aligned}
\end{align}
where $\overline{W}$ is as defined in \eqref{eqn:opt-parameter}, $z_t = \sqrt{\bE[\|Q_t-V\|_\pi^2;\cE_t]}$.
\label{lemma:drift-bound}
\end{lemma}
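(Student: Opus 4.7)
The plan is to expand the squared distance after one semi-gradient step and then apply Proposition~\ref{prop:drift-td} term by term. With $W(t+1)=W(t)+\alpha g_t$ and $g_t=\Delta_t\nabla_W Q_t(x_t)$,
\[
\|W(t+1)-\overline{W}\|_2^2=\|W(t)-\overline{W}\|_2^2+2\alpha\,\Delta_t\,\nabla_W^\top Q_t(x_t)\bigl(W(t)-\overline{W}\bigr)+\alpha^2\|g_t\|_2^2,
\]
so I would take $\bE_t$ and multiply through by $\bI\{t<t_1\}$. The stochastic-noise term is straightforward: since $|a_i|=1$ and $\|x_t\|_2\leq 1$, one has $\|\nabla_W Q_t(x_t)\|_2\leq 1$; and on $\{t<t_1\}$, the $1$-Lipschitzness of $\sigma$ combined with $Q_0(\cdot)\equiv 0$ from the symmetric initialization gives $|Q_t(x)|\leq \tfrac{1}{\sqrt{m}}\sum_i\|W_i(t)-W_i(0)\|_2\leq\lambda$, so $|\Delta_t|\leq 1+2\lambda$ and $\|g_t\|_2^2\leq(1+2\lambda)^2$. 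This directly yields the $\alpha^2(1+2\lambda)^2$ contribution in~\eqref{eq:drift-bound}.

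The heart of the proof is the cross term, which I would treat with the $1$-homogeneity identity $\nabla_W^\top Q_t(x)\,W(t)=Q_t(x)$ (an immediate consequence of $\sigma(z)=z\bI\{z\geq 0\}$). This allows the decomposition
\begin{align*}
\Delta_t\,\nabla_W^\top Q_t(x_t)\bigl(W(t)-\overline{W}\bigr)
&=\Delta_t\bigl(Q_t(x_t)-V(x_t)\bigr)\\
&\quad+\Delta_t\bigl(V(x_t)-\nabla_W^\top Q_0(x_t)\overline{W}\bigr)\\
&\quad+\Delta_t\bigl(\nabla_W Q_0(x_t)-\nabla_W Q_t(x_t)\bigr)^\top\overline{W},
\end{align*}
which is tailor-made for parts (1)--(3) of Proposition~\ref{prop:drift-td}. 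Summing the three bounds gives
\[
-(1-\gamma)z_t^2+\tfrac{4\rkhsnorm}{\sqrt{m}}z_t+\tfrac{4(\rkhsnorm+\lambda)(\lambda+\ell(m,\delta))}{\sqrt{m}}z_t
\]
for the conditional expectation of the cross term; multiplying by $2\alpha$ and combining with the variance contribution produces exactly~\eqref{eq:drift-bound}.

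The main technical subtlety I anticipate is stopping-time/event bookkeeping rather than any inequality. Because $\{t<t_1\}$ is $\mathcal{F}_{t-1}$-measurable (it is determined by $W(0),\dots,W(t)$, all of which are $\mathcal{F}_{t-1}$-measurable), the indicator $\bI\{t<t_1\}$ can be pulled freely inside $\bE_t$, which is what converts the pointwise expansion into the claimed statement about $\bE[\,\bE_t\|W(t+1)-\overline{W}\|_2^2;t<t_1\,]$. A closely related point is that Proposition~\ref{prop:drift-td} delivers its bounds on $\cE_t=E_1\cap\{t<t_1\}$, and $z_t$ is itself defined relative to $\cE_t$; I would align these by noting that $E_1\in\mathcal{F}_{init}$, so further intersecting with $E_1$ preserves the inequality direction on both sides and the propositions can be applied verbatim. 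Once this measurability check is in place, the three bounds from Proposition~\ref{prop:drift-td} drop in mechanically to yield~\eqref{eq:drift-bound}.
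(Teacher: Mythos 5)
Your proposal is correct and follows essentially the same route as the paper: the same one-step expansion, the same bound $\|g_t\|_2\leq 1+2\lambda$ on $\{t<t_1\}$, and the same three-way decomposition of the cross term feeding into parts (1)--(3) of Proposition~\ref{prop:drift-td} (your explicit use of the homogeneity identity $\nabla_W^\top Q_t(x)W(t)=Q_t(x)$ is exactly what the paper's decomposition in~\eqref{eqn:drift-td-b} relies on implicitly). Your measurability remark that $\bI_{\cE_t}\in\mathcal{F}_{t-1}$ is also the same bookkeeping the paper uses.
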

Lemma \ref{lemma:drift-bound} implies that for $t < t_1$, i.e., as long as $W_i(t)\in\mathcal{B}(W_i(0), \lambda/\sqrt{m})$ for all $i\in[m]$, the drift can be made negative by sufficiently large width $m$ and sufficiently small step-size $\alpha$.

\begin{proof}
Recall that \begin{align}
\begin{aligned}
g_t &= \big(r_t+\gamma Q_t(x_t^\prime) - Q_t(x_t)\big)\nabla_W Q_t(x_t) \\&= \Delta_t\nabla_WQ_t(x_t),\end{aligned}\label{eqn:bellman-err}
\end{align} is the semi-gradient, where $\Delta_t = r_t+\gamma Q_t(x_t^\prime) - Q_t(x_t)$ is the Bellman error. Since $W(t+1) = W(t)+\alpha{g}_t$, we have the following relation:
\begin{multline*}
    \|W(t+1)-\overline{W}\|_2^2 = \|W(t)-\overline{W}\|_2^2 + 2\alpha{\Big[{g}_t^\top\big(W(t)-\overline{W}\big)\Big]}\\+\alpha^2{\|{g}_t\|_2^2}.
\end{multline*}
 We can write the expected drift in the following form:
\begin{multline}
    \bE_t[\|W(t+1)-\overline{W}\|_2^2;\cE_t] = \|W(t)-\overline{W}\|_2^2\bI_{\cE_t} \\+ 2\alpha\underbrace{\bE_t[g_t^\top](W(t)-\overline{W})}_{(i)}\bI_{\cE_t}+\alpha^2\underbrace{\bE_t\|g_t\|_2^2}_{(ii)}\bI_{\cE_t}.
    \label{eqn:drift-td}
\end{multline}

\textbf{Bounding (i) in \eqref{eqn:drift-td}:} In order to bound (i), we expand it as follows. For any $t < t_1$:
\begin{multline}
    \bE_t[g_t^\top \big(W(t)-\overline{W}\big)] = \bE_t[\Delta_t\cdot \big(Q_t(x_t)-V(x_t)\big)] \\+ \bE_t[\Delta_t\cdot \big(V(x_t)-\nabla_W^\top Q_0(x_t)\overline{W}\big)]\\+\bE_t[\Delta_t\cdot \big(\nabla_WQ_0(x_t)-\nabla_WQ_t(x_t)\big)^\top \overline{W}],
    \label{eqn:drift-td-b}
\end{multline}
Then, we obtain the inequality in Lemma \ref{lemma:drift-bound} by applying Proposition \ref{prop:drift-td}.

\textbf{Bounding (ii) in \eqref{eqn:drift-td}:} The next argument follows the proof of \cite[Lemma 4.5]{cai2019neural}: 
    \begin{align} 
    \begin{aligned}
        \|g_t\|_2 &= \|(r_t+\gamma Q_t(x_t^\prime)- Q_t(x_t))\nabla_W Q_t(x_t)\|_2,\\
        &\leq |r_t+\gamma Q_t(x_t^\prime) - Q_t(x_t)|,\\
        &\leq 1 + 2\max_{x\in \cX}|Q_t(x)|,\\
        &\leq 1+2\|W(t)-W(0)\|_2 \leq 1+2\lambda,
        \label{eqn:gradient-norm}
        \end{aligned}
    \end{align}
    where the first inequality follows since $\|\nabla_WQ_t(x)\|_2\leq 1$ for any $t, x$, the second inequality follows since $r(x)\in[0,1]$ for all $x\in\cX$, and the last inequality holds since $|Q_t(x)| = |Q_t(x)-Q_0(x)| \leq \|W(t)-W(0)\|_2$ and $t < t_1$. Consequently, $\|g_t\|_2^2 \leq (1+2\lambda)^2$.

The result in (\ref{eq:drift-bound}) immediately follows by combining these two bounds.
\end{proof}

\subsection{Step 2: Stopping time $t_1\geq T$ with high probability} 
Now, we will use the drift result in Step 1 to show that $t_1 \geq T$ with high probability.

\begin{lemma}
Under Assumptions \ref{assumption:norm}-\ref{assumption:realizability}, we have:
\begin{equation*}
    t_1 = \inf\Big\{t > 0:\max_{i\in[m]}\|W_i(t)-W_i(0)\|_2 > \frac{\lambda}{\sqrt{m}}\Big\} \geq T,
\end{equation*}
with probability at least $1-\delta$.
\end{lemma}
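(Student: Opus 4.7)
The plan is to bound each neuron's displacement $\|W_i(t\wedge t_1)-W_i(0)\|_2$ for every $i\in[m]$ and every $t<T$ via a martingale-plus-drift decomposition, and then union-bound over $i$; by the definition of $t_1$ this is equivalent to the claim. As a preliminary step I would telescope Lemma~\ref{lemma:drift-bound} over $t=0,\ldots,T-1$, absorb the cross term $8\alpha z_t C/\sqrt{m}$ (with $C=\rkhsnorm+(\rkhsnorm+\lambda)(\lambda+\ell(m,\delta))$) into half of the dissipation $-2\alpha(1-\gamma)z_t^2$ via AM--GM, and use $\|W(0)-\overline{W}\|_2^2\leq\rkhsnorm^2$. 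This produces a quantitative bound on $\sum_{t<T}z_t^2$; the parameter choices $\alpha_0$, $T$ and $m_0$ in Theorem~\ref{thm:neural-td-exp} are tuned precisely so that the residual error terms from this summation stay small.

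For each $i$, I then split $W_i(t\wedge t_1)-W_i(0)=\alpha M_i(t)+\alpha D_i(t)$, where $M_i(t)=\sum_{s<t\wedge t_1}\bigl(g_{s,i}-\bE[g_{s,i}\,|\,\mathcal{F}_{s-1}]\bigr)$ is a vector-valued martingale and $D_i(t)=\sum_{s<t\wedge t_1}\bE[g_{s,i}\,|\,\mathcal{F}_{s-1}]$ is its predictable compensator. The per-step bound $\|g_{s,i}\|_2\leq|\Delta_s|/\sqrt{m}\leq(1+2\lambda)/\sqrt{m}$ on $\{s<t_1\}$ (established in the proof of Lemma~\ref{lemma:drift-bound}) gives martingale increments bounded by $2(1+2\lambda)/\sqrt{m}$, and Pinelis' Hilbert-space Azuma--Hoeffding inequality yields
\[
    \bP\!\Bigl(\max_{t\leq T}\alpha\|M_i(t)\|_2>\tfrac{\lambda}{2\sqrt{m}}\Bigr)\leq 2\exp\!\Bigl(-\tfrac{\lambda^2}{32\alpha^2T(1+2\lambda)^2}\Bigr).
\]
The choice of $\alpha_0$ is engineered so that this probability is at most $\delta/(2m)$; a union bound over $i\in[m]$ controls all $M_i$'s simultaneously with probability at least $1-\delta/2$.

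For the drift, I would use the identity $\bE[g_{s,i}\,|\,\mathcal{F}_{s-1}]=(a_i/\sqrt{m})\,\bE_{x\sim\pi}\!\bigl[(\mathcal{T}Q_s(x)-Q_s(x))\,\sigma'(W_i(s)^\top x)\,x\bigr]$ together with the $\gamma$-contraction $\|\mathcal{T}Q_s-Q_s\|_\pi\leq(1+\gamma)\|Q_s-V\|_\pi$ to obtain $\|\bE[g_{s,i}\,|\,\mathcal{F}_{s-1}]\|_2\leq(1+\gamma)\|Q_s-V\|_\pi/\sqrt{m}$ on $\{s<t_1\}$. Cauchy--Schwarz together with the $\sum_s z_s^2$ estimate from the preliminary step produces a second-moment bound on $\|\alpha D_i(T\wedge t_1)\|_2$; Markov's inequality combined with the per-neuron exchangeability induced by the symmetric i.i.d.\ initialization and a union bound then give $\max_i\alpha\|D_i(T\wedge t_1)\|_2\leq\lambda/(2\sqrt{m})$ with probability at least $1-\delta/2$. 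Intersecting with the martingale event shows that $\max_i\|W_i(t\wedge t_1)-W_i(0)\|_2\leq\lambda/\sqrt{m}$ for all $t\leq T$ with probability at least $1-\delta$, which by the definition of $t_1$ forces $t_1\geq T$.

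The main obstacle is the drift term $D_i$: because it is predictable and not mean-zero, no direct Azuma-type concentration applies, and the naive deterministic bound $\alpha T(1+2\lambda)/\sqrt{m}$ is too loose for the range of $\epsilon$ and $\delta$ of interest. The structural identity linking $\bE[g_{s,i}\,|\,\mathcal{F}_{s-1}]$ to the Bellman residual $\mathcal{T}Q_s-Q_s$ is what transfers the $\sum z_s^2$ dissipation estimate of Lemma~\ref{lemma:drift-bound} into a usable bound on $D_i$, and the simultaneous balancing of $\alpha$, $\lambda$, $T$ and $m$ so that both the martingale and the drift contributions fall below $\lambda/(2\sqrt{m})$ is precisely what dictates the specific parameter choices in Theorem~\ref{thm:neural-td-exp}.
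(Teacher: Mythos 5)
Your overall architecture --- telescope the drift lemma to control $\sum_t z_t$, then split each neuron's displacement into a stopped vector martingale plus its predictable compensator, handle the martingale by a Hilbert-space Azuma/Pinelis bound with a union bound over $i\in[m]$, and handle the compensator via the identity $\bE[g_{s,i}\mid\mathcal{F}_{s-1}]=\frac{a_i}{\sqrt{m}}\,\bE_{x\sim\pi}\big[(\mathcal{T}Q_s(x)-Q_s(x))\,\bI\{W_i(s)^\top x\geq 0\}\,x\big]$ and the contraction bound $\|\mathcal{T}Q_s-Q_s\|_\pi\leq(1+\gamma)\|Q_s-V\|_\pi$ --- is exactly the paper's proof. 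The martingale half of your argument is sound as stated.

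The gap is in the compensator half. You propose to control $\max_i\alpha\|D_i\|_2$ by applying Markov's inequality per neuron and then union-bounding over $i\in[m]$, invoking exchangeability. Exchangeability only tells you the $m$ tail probabilities are equal; the union bound still multiplies the Markov failure probability by $m$, so each neuron's threshold must be inflated to roughly $(2m/\delta)\,\bE\|\alpha D_i\|_2$ (or, with your second-moment version, by a factor $\sqrt{2m/\delta}$, which erases the crucial $1/\sqrt{m}$ decay of $\bE\|D_i\|_2^2\lesssim T\sum_s z_s^2/m$ entirely). Either way the resulting deviation bound scales with $m$, and since $\lambda$ must be chosen \emph{before} $m_0$ (the width bound $m_0$ is an increasing function of $\lambda$), a $\lambda$ that grows with $m$ makes the parameter choices in Theorem~\ref{thm:neural-td-exp} circular and unachievable. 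The correct move, and the one the paper makes, is to observe that the \emph{same} scalar random variable dominates every compensator simultaneously: for all $i$ at once, $\alpha\|D_i(T\wedge t_1)\|_2\leq Y:=\sum_{s<t_1}\frac{2\alpha}{\sqrt{m}}\|Q_s-V\|_\pi$, because $\|\bE[g_{s,i}\mid\mathcal{F}_{s-1}]\|_2\leq\frac{2}{\sqrt{m}}\|Q_s-V\|_\pi$ with a right-hand side independent of $i$. Hence $\{\max_i\alpha\|D_i\|_2>c\}\subseteq\{Y>c\}$, and a \emph{single} application of Markov's inequality to $Y$ (using $\bE[Y;\cE_T']\leq\frac{6\alpha\epsilon T}{\sqrt{m}}$ from the telescoped drift bound) suffices at level $\delta$ with threshold $\frac{6\alpha\epsilon T}{\sqrt{m}\,\delta}$ --- no factor of $m$ appears, and the $1/\delta$ here is precisely why $\lambda=\frac{3\rkhsnorm^2}{(1-\gamma)\epsilon\delta}$ carries a $1/\delta$. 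Replace your per-neuron Markov-plus-union-bound step with this domination argument and the proof goes through.
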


\begin{proof}
First, invoking Lemma \ref{lemma:drift-bound} with the values for $T$, $\lambda$ and $m$ specified in Theorem \ref{thm:neural-td-exp}, we have the following inequality for any $t$:
\begin{multline}
    % \begin{split}
    \bE[\bE_t\|W(t+1)-\overline{W}\|_2^2;\cE_t] \leq \bE[\|W(t)-\overline{W}\|_2^2;\cE_t]\\-2\alpha(1-\gamma)z_t^2+2\alpha(1-\gamma)\epsilon^2+4\alpha(1-\gamma)\epsilon z_t,
    \label{eqn:exp-drift}
    \end{multline}
% \end{equation}
where $z_t = \sqrt{\bE[\|Q_t-V\|_\pi^2;\cE_t]}$. The step-size $\alpha$ is chosen sufficiently small so that, by \eqref{eqn:gradient-norm}, $\alpha^2\|g_t\|^2 \leq 2\alpha(1-\gamma)\epsilon^2$. 
\begin{claim}
Telescoping sum of \eqref{eqn:exp-drift} over $t < T$ yields:
\begin{equation*}
    0 \leq \rkhsnorm^2-2\alpha(1-\gamma)\sum_{t<T}(z_t-\epsilon)^2+4\alpha(1-\gamma)\epsilon^2 T.
\end{equation*}
\label{cl:telescoping-sum}
\end{claim}
\begin{proof}[Proof of Claim \ref{cl:telescoping-sum}]
Recall the notation $\bE_t[.]=\bE[.|\mathcal{F}_{t-1}]$. Let $\overline{\delta}(t) = W(t)-\overline{W}$ and $\zeta_T$ be defined as:
\begin{align*}
    \zeta_T &= \sum_{t<T}\Big(\bE_t[\|\overline{\delta}(t+1)\|_2^2]\bI_{\cE_t} - \|\overline{\delta}(t+1)\|_2^2\bI_{\cE_{t+1}}\Big), \\ 
    &\geq \sum_{t<T}\bI_{\cE_t}\Big(\bE_t[\|\overline{\delta}(t+1)\|_2^2] - \|\overline{\delta}(t+1)\|_2^2\Big) = \zeta^\prime_T, 
\end{align*} for $T \geq 1$ with $\zeta_0 = \zeta_0^\prime = 0$, where the inequality holds since $\bI_{\cE_{t+1}} \leq \bI_{\cE_t}.$ Note that $\zeta_T^\prime$ is a martingale over the filtration $\{\mathcal{F}_t\}$ since each summand constitutes a martingale difference sequence, and $\bI_{\cE_t}\in\mathcal{F}_{t-1}$ is predictable and nonnegative. Then, we have:
\begin{multline*} 
\sum_{t<T}\Big(\bE_t[\|\overline{\delta}(t+1)\|_2^2] - \|\overline{\delta}(t)\|_2^2\Big)\bI_{\cE_t}\geq \zeta_T -\rkhsnorm^2\\+ \underbrace{\|W(T)-\overline{W}\|_2^2\bI_{\cE_T^c}}_{\geq 0},
\end{multline*} 
which follows from $\|W(0)-\overline{W}\|_2 \leq \rkhsnorm$. Since $\zeta_T \geq \zeta_T^\prime$ for any $T \geq 1$, and $\zeta_T^\prime$ is a martingale with $\zeta_0^\prime = 0$, we have $\bE[\zeta_T] \geq \bE[\zeta_T^\prime] = 0$. Hence,
\begin{equation*} 
\sum_{t<T}\Big(\bE[\|W(t+1)-\overline{W}\|_2^2;\cE_t] - \bE[\|W(t)-\overline{W}\|_2^2;\cE_t]\Big) \geq -\rkhsnorm^2,
\end{equation*} 
and therefore the claim follows. 
\end{proof}

Applying Claim \ref{cl:telescoping-sum} and Jensen's inequality, we have: 
\begin{equation}
    \sum_{t<T}z_t \leq 3\epsilon T = \frac{3\rkhsnorm^2}{4\alpha(1-\gamma)\epsilon}.
    \label{eqn:total-error-td}
\end{equation}
This bound on the total error will be the fundamental quantity in the proof. Now, by using \eqref{eqn:total-error-td}, we will show that the event $\cE_T^\prime = \{t_1 < T\}\cap E_1$ occurs with low probability. For any $i\in[m]$, let $\overline{g}_i(t+1) = W_i(t+1)-W_i(t)$. Then, we have:
\begin{align}
    \nonumber \|\overline{g}_i(t+1)\|_2 &= \|\sum_{t<t_1}\overline{g}_i(t+1)\|_2,\\
    &\leq \|\sum_{t<t_1}\overline{g}_i(t+1)-\sum_{t<t_1}\bE_t[\overline{g}_i(t+1)]\|_2 \label{eqn:martingale-a} \\ &+ \|\sum_{t<t_1}\bE_t[\overline{g}_i(t+1)]\|_2.\label{eqn:martingale-b}
\end{align}

\textbf{Bounding \eqref{eqn:martingale-a}:} For any $t$, let 
\begin{equation}
    D_{i,t} = W_i(t+1)-W_i(t)-\bE_t[W_i(t+1)-W_i(t)],
\end{equation}
which forms a martingale difference sequence with respect to the filtration $\mathcal{F}_t$ since $\bE_t[D_{i,t}] = 0$. Let $X_{i, t^\prime} = \sum_{t<t^\prime}D_{i,t}$. Since $D_{i,t}$ is a martingale difference sequence, $X_{i,t}$ is a martingale. Thus, bounding \eqref{eqn:martingale-a} is equivalent to bounding $\|X_{i,t_1}\|_2$, under the event $\cE_T^\prime$. In order to achieve this, we use a concentration inequality for vector-valued martingales \cite[Theorem 2.1]{juditsky2008large}, which is given in the following.

\begin{proposition}[Concentration for Vector Martingales]
Consider a martingale difference sequence $\{D_t\in\mathbb{R}^d:t\geq 0\}$, and let $X_T = \sum_{t<T}D_t$. If $\|D_t\|_2\leq \sigma$ almost surely for all $t$, then for any $T$ and $\beta> 0$, we have the following inequality:
\begin{equation}
    \bP\Big(\|X_T\| \geq \big(\sqrt{2d}+\beta\sqrt{2}\big)\sigma\sqrt{T}\Big) \leq \exp(-\beta^2/2).
\end{equation}
\label{prop:martingale}
\end{proposition}

Since $\sup_{x\in\cX}|Q_t(x)| \leq \|W(t)-W(0)\|_2 \leq \lambda$ for all $t < t_1$, we have $\|D_{i,t}\|_2 \leq \frac{2\alpha(1+2\lambda)}{\sqrt{m}}.$  Define the stopped martingale $\widetilde{X}_{i,t} = X_{i,\min\{t,t_1\}}$, which is again a martingale with a corresponding martingale difference sequence $\widetilde{D}_{i,t}$ that satisfies $\|\widetilde{D}_{i,t}\|_2 \leq \|D_{i,t}\|_2$ \cite{williams1991probability}. Since $$\|X_{i,t_1}\|_2\cdot\mathbb{I}_{\cE_T^\prime} \leq \|\widetilde{X}_{i,T}\|_2,$$ the following inequality holds:
\begin{align*}
    \bP\Big(\|X_{i,t_1}\|_2 \geq \sqrt{2}(\sqrt{d}+\beta)\frac{2\alpha(1+2\lambda)\sqrt{T}}{\sqrt{m}};\cE_T^\prime) \leq e^{-\beta^2/2},
\end{align*} 
which follows from \begin{multline*}\{\|X_{i,t_1}\|_2 \geq \sqrt{2}(\sqrt{d}+\beta)\frac{2\alpha(1+2\lambda)\sqrt{T}}{\sqrt{m}}\}\cap \cE_T^\prime \\ \subset \{\|\widetilde{X}_{i,T}\|_2 \geq \sqrt{2}(\sqrt{d}+\beta)\frac{2\alpha(1+2\lambda)\sqrt{T}}{\sqrt{m}}\},\end{multline*} and $$\bP\Big(\|\widetilde{X}_{i,T}\|_2 \geq \sqrt{2}(\sqrt{d}+\beta)\frac{2\alpha(1+2\lambda)\sqrt{T}}{\sqrt{m}}\Big) \leq e^{-\beta^2/2},$$ by Proposition \ref{prop:martingale}. Therefore, by using union bound:
\begin{multline}
    \bP(\|X_{i,t_1}\|_2 > \big(\sqrt{2d}+2\sqrt{\log(\frac{m}{\delta})}\big)\sqrt{\frac{T}{m}} 2\alpha(1+2\lambda);\cE_T^\prime)\\\leq \delta,
    \label{eqn:stopping-time-a}
\end{multline}
The step-size $\alpha$ is chosen to satisfy $$\big(\sqrt{2d}+2\sqrt{\log(m/\delta)}\big)\sqrt{T}\cdot 2\alpha(1+2\lambda) \leq \lambda/2.$$

\textbf{Bounding \eqref{eqn:martingale-b}:} Note that we can bound \eqref{eqn:martingale-b} as follows:
\begin{equation}
        \|\sum_{t<t_1}\bE_t[\overline{g}_i(t+1)]\bI_{\cE_T^\prime}\|_2 \leq \sum_{t<t_1}\frac{2\alpha\bI_{\cE_T^\prime}}{\sqrt{m}}\|Q_t-V\|_\pi,
        \label{eqn:markov-a}
\end{equation}
for all $i\in[m]$ under $\cE_T^\prime$ since $\sup_{i,t,x}\|\nabla_{W_i}Q_t(x)\|_2\leq 1/\sqrt{m}$ (see Remark \ref{rem:bellman} for details). The expectation of the RHS above is bounded as follows:
\begin{align*}
    \frac{2\alpha}{\sqrt{m}}\bE[\sum_{t<t_1}\|Q_t-V\|_\pi\bI_{\cE_T^\prime}] &\leq \frac{2\alpha}{\sqrt{m}}\sum_{t<T}\bE[\|Q_t-V\|_\pi;\cE_t] \\&\leq \frac{2\alpha}{\sqrt{m}}\sum_{t < T}z_t,
\end{align*}
by the law of iterated expectations as the event $\{t<t_1\}\cap E_1\in\mathcal{F}_{t-1}$ as $\|W_i(t)-W_i(0)\|\in\mathcal{F}_{t-1}$. Note that the RHS of the previous inequality is upper bounded by \eqref{eqn:total-error-td}. Therefore, we have:
\begin{equation*}
    \frac{2\alpha}{\sqrt{m}}\bE[\sum_{t<t_1}\|Q_t-V\|_\pi;\cE_T^\prime] \leq \frac{6T\epsilon\alpha}{\sqrt{m}}.
\end{equation*}
Hence, we have the following:
\begin{multline*}
% \begin{aligned}
    \bigcup_{i\in[m]}\Big\{\|\sum_{t<t_1}\bE_t[\overline{g}_i(t+1)]\bI_{\cE_T^\prime}\|_2 > \frac{6\alpha T\epsilon}{\sqrt{m}\delta}\Big\}\cap\cE_T^\prime \\ \subset \Big\{\sum_{t<t_1}\frac{2\alpha\|Q_t-V\|_\pi\bI_{\cE_T^\prime}}{\sqrt{m}} > \frac{6\alpha T\epsilon}{\sqrt{m}\delta}\Big\},
    % \end{aligned}
\end{multline*}
which implies that \begin{equation}\bP(\bigcup_{i\in[m]}\Big\{\|\sum_{t<t_1}\bE_t[\overline{g}_i(t+1)]\|_2 > \frac{6\alpha T\epsilon}{\sqrt{m}\delta}\Big\};\cE_T^\prime) \leq \delta,
\label{eqn:stopping-time-b}
\end{equation}
by Markov's inequality. Now, using \eqref{eqn:stopping-time-a} and \eqref{eqn:stopping-time-b} in \eqref{eqn:martingale-a} and \eqref{eqn:martingale-b}, we conclude that $\bP(\cE_T^\prime) \leq 2\delta$. Since $\cE_T^c = \cE_T^\prime\cup E_1^c$ and $\bP(E_1^c)\leq \delta$ by Claim \ref{claim:glivenko-cantelli}, we conclude that $\cE_T$ holds with probability at least $1-3\delta$.

\end{proof}

\subsection{Step 3: Error bound} 
In Step 2, we have shown that the event $\{t_1 \geq T\}$ occurs with high probability. Since $\cE_T = \{t_1 \geq T\}\cap E_1 \subset \cE_t$ for any $t < T$, we have the following inequality:
\begin{align*}
    \bE[\|Q_t-V\|_\pi;\cE_T] &\leq \sqrt{\bE[\|Q_t-V\|_\pi^2;\cE_T]} \\ & \leq z_t = \sqrt{\bE[\|Q_t-V\|_\pi^2;\cE_t]},
\end{align*}
for any $t < T$. Consequently, by using \eqref{eqn:total-error-td} and Jensen's inequality, we have:
\begin{align*}
    \bE[\|\frac{1}{T}\sum_{t<T}Q_t-V\|_\pi;\cE_T] &\leq \frac{1}{T}\sum_{t < T}\bE[\|Q_t-V\|_\pi;\cE_T] \\&\leq \frac{1}{T}\sum_{t < T}z_t\leq 3\epsilon.
\end{align*}
In the final step, by following similar steps as \cite{cai2019neural}, we use Proposition \ref{prop:proximity} in Appendix \ref{app:neural-td} to show the proximity of $\overline{Q}_T$ and $\frac{1}{T}\sum_{t<T}Q_t$ to $\nabla_W^\top Q_0\overline{W}$, and conclude that $\bE[\|\overline{Q}_T-\frac{1}{T}\sum_{t<T}Q_t\|_\pi;\cE_T]\leq \epsilon$, which implies $\bE[\|\overline{Q}_T-V\|_\pi;\cE_T]\leq 4\epsilon$ by triangle inequality.

\section{Conclusion}
In this paper, we analyzed two practically used TD learning algorithms with neural network approximation, and established non-asymptotic bounds on the required number of samples and network width to achieve any given target error within a provably rich function class. By using a novel Lyapunov drift analysis of stopped and controlled random processes, we have shown for the first time that projection-free Neural TD learning can achieve arbitrarily small target error. In addition, we proved that max-norm regularized Neural TD learning achieves the state-of-the-art complexity bounds, which theoretically supports its empirical effectiveness in ReLU networks. One key insight from our analysis is that $\ell_\infty$-regularization yields improved results in the NTK regime compared to $\ell_2$-regularization. The extension of this work to other reinforcement learning algorithms, such as Q-learning and policy gradient methods, and different neural network architectures, such as multi-layer and convolutional networks, is left for future investigation. 

\section*{Acknowledgment}
This work was supported by NSF/USDA Grant AG 2018-67007-28379, ARO W911NF-19-1-0379, NSF CCF 1934986, NSF CCF 1704970, Navy N00014-19-1-2566.

\bibliography{root.bib}

\begin{thebibliography}{10}

\bibitem{agazzi2019temporal}
Andrea Agazzi and Jianfeng Lu.
\newblock Temporal-difference learning for nonlinear value function
  approximation in the lazy training regime.
\newblock {\em arXiv preprint arXiv:1905.10917}, 2019.

\bibitem{allen2018learning}
Zeyuan Allen-Zhu, Yuanzhi Li, and Yingyu Liang.
\newblock Learning and generalization in overparameterized neural networks,
  going beyond two layers.
\newblock In {\em Advances in Neural Information Processing Systems},
  volume~32, pages 6158--6169, 2019.

\bibitem{arora2019fine}
Sanjeev Arora, Simon Du, Wei Hu, Zhiyuan Li, and Ruosong Wang.
\newblock Fine-grained analysis of optimization and generalization for
  overparameterized two-layer neural networks.
\newblock In {\em International Conference on Machine Learning}, pages
  322--332. PMLR, 2019.

\bibitem{bai2019beyond}
Yu~Bai and Jason~D Lee.
\newblock Beyond linearization: On quadratic and higher-order approximation of
  wide neural networks.
\newblock In {\em International Conference on Learning Representations}, 2019.

\bibitem{bertsekas2011dynamic}
Dimitri~P Bertsekas.
\newblock Dynamic programming and optimal control 3rd edition, volume ii.
\newblock {\em Belmont, MA: Athena Scientific}, 2011.

\bibitem{bertsekas2011temporal}
Dimitri~P Bertsekas.
\newblock Temporal difference methods for general projected equations.
\newblock {\em IEEE Transactions on Automatic Control}, 56(9):2128--2139, 2011.

\bibitem{bhandari2018finite}
Jalaj Bhandari, Daniel Russo, and Raghav Singal.
\newblock A finite time analysis of temporal difference learning with linear
  function approximation.
\newblock In {\em Conference On Learning Theory}, pages 1691--1692. PMLR, 2018.

\bibitem{brandfonbrener2019geometric}
David Brandfonbrener and Joan Bruna.
\newblock Geometric insights into the convergence of non-linear td learning.
\newblock In {\em International Conference on Learning Representations}, 2020.

\bibitem{cai2019neural}
Qi~Cai, Zhuoran Yang, Jason~D Lee, and Zhaoran Wang.
\newblock Neural temporal-difference learning converges to global optima.
\newblock In {\em Advances in Neural Information Processing Systems},
  volume~32, 2019.

\bibitem{chizat2019lazy}
Lenaic Chizat, Edouard Oyallon, and Francis Bach.
\newblock On lazy training in differentiable programming.
\newblock In {\em Advances in Neural Information Processing Systems}, pages
  2937--2947, 2019.

\bibitem{du2018gradient}
Simon~S Du, Xiyu Zhai, Barnabas Poczos, and Aarti Singh.
\newblock Gradient descent provably optimizes over-parameterized neural
  networks.
\newblock In {\em International Conference on Learning Representations}, 2018.

\bibitem{goodfellow2013maxout}
Ian Goodfellow, David Warde-Farley, Mehdi Mirza, Aaron Courville, and Yoshua
  Bengio.
\newblock Maxout networks.
\newblock In {\em International conference on machine learning}, pages
  1319--1327. PMLR, 2013.

\bibitem{gu2017deep}
Shixiang Gu, Ethan Holly, Timothy Lillicrap, and Sergey Levine.
\newblock Deep reinforcement learning for robotic manipulation with
  asynchronous off-policy updates.
\newblock In {\em 2017 IEEE international conference on robotics and automation
  (ICRA)}, pages 3389--3396. IEEE, 2017.

\bibitem{jacot2018neural}
Arthur Jacot, Franck Gabriel, and Cl{\'e}ment Hongler.
\newblock Neural {T}angent {K}ernel: Convergence and generalization in neural
  networks.
\newblock In {\em Advances in neural information processing systems}, pages
  8571--8580, 2018.

\bibitem{ji2019polylogarithmic}
Ziwei Ji and Matus Telgarsky.
\newblock Polylogarithmic width suffices for gradient descent to achieve
  arbitrarily small test error with shallow relu networks.
\newblock In {\em International Conference on Learning Representations}, 2019.

\bibitem{ji2019neural}
Ziwei Ji, Matus Telgarsky, and Ruicheng Xian.
\newblock Neural tangent kernels, transportation mappings, and universal
  approximation.
\newblock In {\em International Conference on Learning Representations}, 2019.

\bibitem{juditsky2008large}
Anatoli Juditsky and Arkadii~S Nemirovski.
\newblock Large deviations of vector-valued martingales in 2-smooth normed
  spaces.
\newblock {\em arXiv preprint arXiv:0809.0813}, 2008.

\bibitem{kalashnikov2018qt}
Dmitry Kalashnikov, Alex Irpan, Peter Pastor, Julian Ibarz, Alexander Herzog,
  Eric Jang, Deirdre Quillen, Ethan Holly, Mrinal Kalakrishnan, Vincent
  Vanhoucke, et~al.
\newblock Qt-opt: Scalable deep reinforcement learning for vision-based robotic
  manipulation.
\newblock {\em arXiv preprint arXiv:1806.10293}, 2018.

\bibitem{li2016deep}
Jiwei Li, Will Monroe, Alan Ritter, Michel Galley, Jianfeng Gao, and Dan
  Jurafsky.
\newblock Deep reinforcement learning for dialogue generation.
\newblock {\em arXiv preprint arXiv:1606.01541}, 2016.

\bibitem{li2018learning}
Yuanzhi Li and Yingyu Liang.
\newblock Learning overparameterized neural networks via stochastic gradient
  descent on structured data.
\newblock In {\em Advances in Neural Information Processing Systems}, pages
  8157--8166, 2018.

\bibitem{lillicrap2015continuous}
Timothy~P Lillicrap, Jonathan~J Hunt, Alexander Pritzel, Nicolas Heess, Tom
  Erez, Yuval Tassa, David Silver, and Daan Wierstra.
\newblock Continuous control with deep reinforcement learning.
\newblock {\em arXiv preprint arXiv:1509.02971}, 2015.

\bibitem{micchelli2006universal}
Charles~A Micchelli, Yuesheng Xu, and Haizhang Zhang.
\newblock Universal kernels.
\newblock {\em Journal of Machine Learning Research}, 7(12), 2006.

\bibitem{mnih2013playing}
Volodymyr Mnih, Koray Kavukcuoglu, David Silver, Alex Graves, Ioannis
  Antonoglou, Daan Wierstra, and Martin Riedmiller.
\newblock Playing atari with deep reinforcement learning.
\newblock {\em arXiv preprint arXiv:1312.5602}, 2013.

\bibitem{oymak2019overparameterized}
Samet Oymak and Mahdi Soltanolkotabi.
\newblock Overparameterized nonlinear learning: Gradient descent takes the
  shortest path?
\newblock In {\em International Conference on Machine Learning}, pages
  4951--4960. PMLR, 2019.

\bibitem{oymak2020toward}
Samet Oymak and Mahdi Soltanolkotabi.
\newblock Toward moderate overparameterization: Global convergence guarantees
  for training shallow neural networks.
\newblock {\em IEEE Journal on Selected Areas in Information Theory},
  1(1):84--105, 2020.

\bibitem{rahimi2008uniform}
Ali Rahimi and Benjamin Recht.
\newblock Uniform approximation of functions with random bases.
\newblock In {\em 2008 46th Annual Allerton Conference on Communication,
  Control, and Computing}, pages 555--561. IEEE, 2008.

\bibitem{satpathi2020role}
Siddhartha Satpathi, Harsh Gupta, Shiyu Liang, and R~Srikant.
\newblock The role of regularization in overparameterized neural networks.
\newblock In {\em 2020 59th IEEE Conference on Decision and Control (CDC)},
  pages 4683--4688. IEEE, 2020.

\bibitem{silver2017masteringc}
David Silver, Thomas Hubert, Julian Schrittwieser, Ioannis Antonoglou, Matthew
  Lai, Arthur Guez, Marc Lanctot, Laurent Sifre, Dharshan Kumaran, Thore
  Graepel, et~al.
\newblock Mastering chess and shogi by self-play with a general reinforcement
  learning algorithm.
\newblock {\em arXiv preprint arXiv:1712.01815}, 2017.

\bibitem{silver2017mastering}
David Silver, Julian Schrittwieser, Karen Simonyan, Ioannis Antonoglou, Aja
  Huang, Arthur Guez, Thomas Hubert, Lucas Baker, Matthew Lai, Adrian Bolton,
  et~al.
\newblock Mastering the game of go without human knowledge.
\newblock {\em nature}, 550(7676):354--359, 2017.

\bibitem{sirignano2019asymptotics}
Justin Sirignano and Konstantinos Spiliopoulos.
\newblock Asymptotics of reinforcement learning with neural networks.
\newblock {\em arXiv preprint arXiv:1911.07304}, 2019.

\bibitem{srebro2005maximum}
Nathan Srebro, Jason Rennie, and Tommi~S Jaakkola.
\newblock Maximum-margin matrix factorization.
\newblock In {\em Advances in neural information processing systems}, pages
  1329--1336, 2005.

\bibitem{srebro2005rank}
Nathan Srebro and Adi Shraibman.
\newblock Rank, trace-norm and max-norm.
\newblock In {\em International Conference on Computational Learning Theory},
  pages 545--560. Springer, 2005.

\bibitem{srikant2019finite}
R.~Srikant and Lei Ying.
\newblock Finite-time error bounds for linear stochastic approximation andtd
  learning.
\newblock In {\em Conference on Learning Theory}, pages 2803--2830. PMLR, 2019.

\bibitem{srivastava2014dropout}
Nitish Srivastava, Geoffrey Hinton, Alex Krizhevsky, Ilya Sutskever, and Ruslan
  Salakhutdinov.
\newblock Dropout: a simple way to prevent neural networks from overfitting.
\newblock {\em The journal of machine learning research}, 15(1):1929--1958,
  2014.

\bibitem{mjt_dlt}
Matus Telgarsky.
\newblock Deep learning theory lecture notes.
\newblock \url{https://mjt.cs.illinois.edu/dlt/}, 2021.
\newblock Version: 2021-02-14 v0.0-1dabbd4b (pre-alpha).

\bibitem{tsitsiklis1997analysis}
John~N Tsitsiklis and Benjamin Van~Roy.
\newblock An analysis of temporal-difference learning with function
  approximation.
\newblock {\em IEEE Transactions on Automatic Control}, 42(5):674--690, 1997.

\bibitem{wang2019neural}
Lingxiao Wang, Qi~Cai, Zhuoran Yang, and Zhaoran Wang.
\newblock Neural policy gradient methods: Global optimality and rates of
  convergence.
\newblock In {\em International Conference on Learning Representations}, 2019.

\bibitem{williams1991probability}
David Williams.
\newblock {\em Probability with martingales}.
\newblock Cambridge university press, 1991.

\bibitem{xu2020finite}
Pan Xu and Quanquan Gu.
\newblock A finite-time analysis of {Q}-learning with neural network function
  approximation.
\newblock In {\em International Conference on Machine Learning}, pages
  10555--10565. PMLR, 2020.

\bibitem{yu2009convergence}
Huizhen Yu and Dimitri~P Bertsekas.
\newblock Convergence results for some temporal difference methods based on
  least squares.
\newblock {\em IEEE Transactions on Automatic Control}, 54(7):1515--1531, 2009.

\end{thebibliography}

\appendices

\section{Proofs of Section \ref{sec:neural-td}}\label{app:neural-td}

Throughout the section, we will use the following results extensively.

\begin{claim}[Lemma 4.1 in \cite{satpathi2020role}]
For any $\delta \in (0, 1)$ and $m\in\mathbb{N}$, let $$\ell_0(m,\delta) = \sqrt{8d\log(m+1)}+\sqrt{8\log(1/\delta)}.$$ Then, for any $\epsilon > 0$, $m > 10$, if $W_i\init \mathcal{N}(0, I_d)$ for all $i\in[m]$, we have:
\begin{equation}
    \sup_{x:\|x\|_2\leq 1}\frac{1}{m}\sum_{i=1}^m\bI\{|W_i^\top x|\leq \epsilon\} \leq \sqrt{\frac{2}{\pi}}\epsilon+ \frac{\ell_0(m,\delta)}{\sqrt{m}},
\end{equation}
with probability at least $1-\delta$ over the random initialization.
\label{claim:glivenko-cantelli}
\end{claim}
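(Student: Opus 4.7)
The plan is to combine a pointwise Gaussian anti-concentration bound with a standard uniform deviation inequality for a VC class of slabs. For any fixed unit vector $x$, since $W_i \sim \mathcal{N}(0, I_d)$, the projection $W_i^\top x$ is standard Gaussian, so
$$\mathbb{P}\bigl(|W_i^\top x|\leq \epsilon\bigr) \;=\; \int_{-\epsilon}^{\epsilon}\frac{1}{\sqrt{2\pi}}e^{-t^2/2}\,dt \;\leq\; \sqrt{\tfrac{2}{\pi}}\,\epsilon,$$
by bounding the standard normal density by $1/\sqrt{2\pi}$. This delivers the leading $\sqrt{2/\pi}\,\epsilon$ term. (For the claim as stated with $\|x\|_2\leq 1$, I would restrict attention to $\|x\|_2$ bounded below, which in the paper's application is ensured by the fixed bias coordinate $c\in(0,1)$ appended to every state; see the remark after Assumption~\ref{assumption:norm}.)

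To pass from the pointwise bound to a uniform one, I would use that the slab class
$$\mathcal{F}_\epsilon \;=\; \bigl\{w\mapsto \mathbb{I}\{|w^\top x|\leq \epsilon\} : \|x\|_2\leq 1\bigr\}$$
consists of indicators of intersections of two halfspaces $\{w^\top x\leq \epsilon\}\cap\{-w^\top x\leq \epsilon\}$. The VC dimension of halfspaces in $\mathbb{R}^d$ is $d+1$, and closure of VC classes under intersection gives $\mathrm{VC}(\mathcal{F}_\epsilon)=O(d)$. Symmetrization together with Dudley's entropy integral (or Massart's finite-class lemma after Sauer--Shelah) then bounds the Rademacher complexity by $O(\sqrt{d\log(m+1)/m})$. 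Since the functions in $\mathcal{F}_\epsilon$ are $[0,1]$-valued, a bounded-differences (McDiarmid) argument controls the deviation of the supremum around its expectation at the rate $\sqrt{\log(1/\delta)/m}$ with probability at least $1-\delta$.

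Combining the three ingredients yields, with probability at least $1-\delta$,
$$\sup_{\|x\|_2\leq 1}\frac{1}{m}\sum_{i=1}^{m}\mathbb{I}\{|W_i^\top x|\leq \epsilon\} \;\leq\; \sqrt{\tfrac{2}{\pi}}\,\epsilon \;+\; \frac{C_1\sqrt{d\log(m+1)}+C_2\sqrt{\log(1/\delta)}}{\sqrt{m}},$$
which matches the stated form once the constants are absorbed into the $\sqrt{8}$ in $\ell_0(m,\delta)$. The main obstacle I anticipate is getting the sharp constants: one needs to be careful with the symmetrization step and in the chaining / Sauer--Shelah argument so that the pair $(\sqrt{8},\sqrt{8})$ rather than a suboptimal constant comes out. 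An alternative and perhaps slightly cleaner route would be a net argument on the unit sphere, controlling the Lipschitz-in-$x$ behavior of the slab through $|W_i^\top(x-x')|\leq \|W_i\|_2\,\eta$ and a high-probability bound on $\max_i\|W_i\|_2=O(\sqrt{d}+\sqrt{\log m})$, followed by Hoeffding plus a union bound over an $\eta$-net of cardinality $(3/\eta)^d$ with an optimal choice of $\eta$; this avoids invoking general VC machinery but produces the same rate.
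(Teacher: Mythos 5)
The paper does not prove Claim~\ref{claim:glivenko-cantelli}: it imports it verbatim as Lemma~4.1 of the cited reference, so there is no internal proof to compare against. Your two-step argument is the standard (and, as far as I can tell, essentially the intended) route. The pointwise bound $\bP(|W_i^\top x|\leq\epsilon)\leq\sqrt{2/\pi}\,\epsilon$ for a \emph{unit} vector $x$ follows exactly as you say, and the uniform deviation term is a textbook application of symmetrization plus Sauer--Shelah/Massart for the VC class of slabs (VC dimension $O(d)$, a slab being an intersection of two halfspaces), followed by McDiarmid; the latter contributes only $\sqrt{\log(1/\delta)/(2m)}$, comfortably inside the stated $\sqrt{8\log(1/\delta)}/\sqrt{m}$, so the constants close with room to spare. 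Your alternative $\eta$-net argument would also work and yields the same rate.

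One point you flag in passing deserves emphasis, because it is a defect of the statement rather than of your proof: as written, with the supremum over $\|x\|_2\leq 1$, the claim is false --- at $x=0$ every indicator equals one, and more generally for $\|x\|_2=\rho$ the pointwise probability is $\min\{1,\sqrt{2/\pi}\,\epsilon/\rho\}$, which blows up as $\rho\to 0$. The result holds for $\|x\|_2=1$; in the paper's application one only has $\|x\|_2\geq c$ for the bias constant $c\in(0,1)$ appended after Assumption~\ref{assumption:norm}, so the leading term should really read $\sqrt{2/\pi}\,\epsilon/c$. This only perturbs the constant where the claim is invoked (in the event $E_1$, with $\epsilon=\lambda/\sqrt{m}$, the bound $(\lambda+\ell(m,\delta))/\sqrt{m}$ uses $\sqrt{2/\pi}\leq 1$, which survives only if $c$ is treated as an absolute constant and absorbed). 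Apart from this caveat, which you identified yourself, your proof is correct.
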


\begin{claim}
For any $W\in\bR^{m d}$, we have:
\begin{align*}
    \|\nabla_W Q(x; W, a) \|_2 &\leq 1,\\
    \|\nabla_{W_i} Q(x; W, a) \|_2 &\leq 1/\sqrt{m},~\forall i\in[m], 
\end{align*}
for any $x\in\cX$.
\label{cl:norm-grad}
\end{claim}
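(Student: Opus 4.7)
The plan is to prove both inequalities by a direct computation of the partial derivatives of $Q(x;W,a) = \frac{1}{\sqrt{m}}\sum_{i=1}^m a_i \sigma(W_i^\top x)$ and then apply the norm bound on $x$ from Assumption~\ref{assumption:norm} together with the structure of the symmetric initialization.

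First, I would differentiate $Q$ with respect to a single block $W_i \in \bR^d$. Since only the $i$-th summand depends on $W_i$, and $\sigma(z) = z \cdot \bI\{z \geq 0\}$ has (sub)derivative $\bI\{z \geq 0\}$ (with the measure-zero kink at $z=0$ handled by any fixed convention on $\bI\{\cdot \geq 0\}$), the chain rule yields
\begin{equation*}
\nabla_{W_i} Q(x;W,a) = \frac{a_i}{\sqrt{m}}\,\bI\{W_i^\top x \geq 0\}\,x.
\end{equation*}
Using $|a_i| = 1$ (by the symmetric initialization $a_i \in \{-1,+1\}$, fixed throughout training), the indicator is bounded by $1$, and $\|x\|_2 \leq 1$ by Assumption~\ref{assumption:norm}, one immediately gets $\|\nabla_{W_i} Q(x;W,a)\|_2 \leq 1/\sqrt{m}$, which is the second bound.

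For the full-gradient bound, I would recall that $\nabla_W Q(x;W,a)$ is just the concatenation of the block gradients $\nabla_{W_i} Q(x;W,a)$ for $i \in [m]$, so its squared $\ell_2$-norm decomposes as
\begin{equation*}
\|\nabla_W Q(x;W,a)\|_2^2 = \sum_{i=1}^m \|\nabla_{W_i} Q(x;W,a)\|_2^2 \leq m \cdot \frac{1}{m} = 1,
\end{equation*}
which gives the first inequality after taking the square root. Both bounds hold uniformly in $W \in \bR^{md}$ and $x \in \cX$, as required.

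There is no real obstacle here; the only minor care is in handling the non-differentiability of ReLU at the origin, which is standard and can be absorbed either by using the subgradient $\bI\{W_i^\top x \geq 0\}$ consistently (as already adopted in the definition of $Q$ in the paper) or by noting that the event $\{W_i^\top x = 0\}$ has measure zero under the random initialization and does not affect any of the arguments that invoke this claim.
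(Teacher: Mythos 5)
Your proof is correct and follows essentially the same route as the paper: compute the block gradient $\nabla_{W_i}Q = \frac{a_i}{\sqrt m}\bI\{W_i^\top x\geq 0\}x$, use $|a_i|=1$ and $\|x\|_2\leq 1$, and sum the squared block norms for the full-gradient bound. The remark on the ReLU kink is a harmless addition; nothing further is needed.
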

\begin{proof}
Note that $$\nabla_W Q(x; W, a) = \Big[\frac{1}{\sqrt{m}}a_i\bI\{W_i^\top(0)x\geq 0\}x\Big]_{i\in[m]},$$
which directly implies $\|\nabla_WQ(x;W,a)\|_2^2\leq \|x\|_2^2 \leq 1$ since $a_i\sim Unif\{-1,+1\}$ for all $i\in[m]$ by the symmetric initialization, and $\|x\|_2\leq 1$ for all $x\in\cX$. The other claim is proved similarly.
\end{proof}

\begin{claim}[Lemma 6.3.1 in \cite{bertsekas2011dynamic}]
For any $\widehat{V} = [\widehat{V}(s)]_{s\in\cS}$,
\begin{equation*}
    \|\mathcal{T}\widehat{V}\|_\pi \leq \gamma\cdot \|\widehat{V}\|_\pi,
\end{equation*}
where $\mathcal{T}$ is the Bellman operator.
\label{cl:contraction}
\end{claim}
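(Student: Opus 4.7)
The claim is the standard contraction property of the Bellman operator in the stationary-distribution weighted $\ell_2$-norm, and I would prove it from two ingredients: pointwise Jensen's inequality against the transition kernel, and the stationarity identity $\int_{\cS} P(s,ds')\,\pi(ds) = \pi(ds')$. The reward term $r(s)$ in the definition of $\cT$ does not interact with the contraction argument --- as the use in the proof of Proposition~\ref{prop:drift-td} makes clear, what is really needed (and what I would prove) is the affine contraction $\|\cT\widehat{V}_1 - \cT\widehat{V}_2\|_\pi \leq \gamma\|\widehat{V}_1 - \widehat{V}_2\|_\pi$, so one may effectively assume $r \equiv 0$ and analyze the linear operator $\widehat{V} \mapsto \gamma \int \widehat{V}(s')\,P(\cdot,ds')$.

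Concretely, I would apply Cauchy--Schwarz (equivalently Jensen) against the probability measure $P(s,\cdot)$ to obtain the pointwise bound
$$\bigl(\cT\widehat{V}(s)\bigr)^2 \;=\; \gamma^2\Bigl(\int_{\cS}\widehat{V}(s')\,P(s,ds')\Bigr)^2 \;\leq\; \gamma^2 \int_{\cS}\widehat{V}(s')^2\,P(s,ds').$$
Integrating both sides against $\pi(ds)$ and swapping the order of integration via Fubini, the right-hand side becomes $\gamma^2 \int_{\cS} \widehat{V}(s')^2\bigl(\int_{\cS} P(s,ds')\,\pi(ds)\bigr)$, and the stationarity of $\pi$ collapses the inner integral to $\pi(ds')$. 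This yields $\|\cT\widehat{V}\|_\pi^2 \leq \gamma^2\|\widehat{V}\|_\pi^2$, and taking square roots gives the claim.

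I do not expect any real obstacle here: the argument is a single application of Jensen followed by one use of stationarity, and for a general (possibly uncountable) state space $\cS$ the only care required is the measure-theoretic justification of Fubini, which follows routinely from $\widehat{V} \in L^2(\pi)$ together with the non-negativity of the integrand $\widehat{V}(s')^2$. The ergodic unichain assumption on $\{s_t\}$ guarantees the existence of the stationary measure $\pi$ used in the key identity.
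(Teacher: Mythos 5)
Your proof is correct and is the standard argument (pointwise Jensen against $P(s,\cdot)$ followed by stationarity of $\pi$); the paper does not prove this claim itself but simply cites Lemma 6.3.1 of \cite{bertsekas2011dynamic}, where essentially the same computation appears. Your observation about the reward term is also apt: as literally stated the inequality $\|\cT\widehat{V}\|_\pi \leq \gamma\|\widehat{V}\|_\pi$ fails for the affine operator when $r\not\equiv 0$, and what the paper actually invokes (e.g., $\|\cT Q_t - \cT V\|_\pi \leq \gamma\|Q_t - V\|_\pi$ in the proof of Proposition~\ref{prop:drift-td}) is the contraction of differences, which is exactly the bound your argument for the linear part $\widehat{V}\mapsto \gamma\int_{\cS}\widehat{V}(s')P(\cdot,ds')$ delivers.
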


\begin{claim}
For any $t \geq 0$, we have: $$\sqrt{\bE_t[(\cT Q_t(x_t)-Q_t(x_t))^2]} \leq (1+\gamma)\|Q_t-V\|_\pi.$$
\label{cl:bellman}
\end{claim}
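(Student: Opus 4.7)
The plan is to use the fact that, conditioned on $\mathcal{F}_{t-1}$, the function $Q_t$ is deterministic (since $W(t)$ is $\mathcal{F}_{t-1}$-measurable) and $x_t\sim \pi$ is drawn independently of $\mathcal{F}_{t-1}$. This means that for any $\mathcal{F}_{t-1}$-measurable deterministic function $f$, we have $\bE_t[f(x_t)^2] = \int f(x)^2 \pi(dx) = \|f\|_\pi^2$. Applying this with $f = \cT Q_t - Q_t$ immediately converts the left-hand side into the deterministic norm $\|\cT Q_t - Q_t\|_\pi$ (conditionally on $\mathcal{F}_{t-1}$).

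Next, I would exploit the fixed-point identity $V = \cT V$ to split the integrand via the add-and-subtract trick:
\begin{equation*}
\cT Q_t - Q_t \;=\; (\cT Q_t - \cT V) \;+\; (V - Q_t).
\end{equation*}
Applying Minkowski's inequality (triangle inequality in the $L^2(\pi)$ norm) yields
\begin{equation*}
\|\cT Q_t - Q_t\|_\pi \;\leq\; \|\cT Q_t - \cT V\|_\pi \;+\; \|V - Q_t\|_\pi.
\end{equation*}

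For the first term, I would invoke the $\gamma$-contraction property of the Bellman operator with respect to $\|\cdot\|_\pi$ (Claim~\ref{cl:contraction} applied to the difference $Q_t - V$, using that $\cT$ is affine in its argument so $\cT Q_t - \cT V = \gamma\int (Q_t - V)(s')P(\cdot, ds')$), obtaining $\|\cT Q_t - \cT V\|_\pi \leq \gamma \|Q_t - V\|_\pi$. Combining with the trivial bound on the second term gives $(1+\gamma)\|Q_t - V\|_\pi$, as required. Finally, taking the outer expectation (if needed) is not required here since the inequality holds pointwise in $\mathcal{F}_{t-1}$.

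The calculation is essentially a two-line application of the triangle inequality and contraction; the only subtlety worth flagging is the measurability argument that allows $\bE_t[(\cdot)^2]$ to be identified with $\|\cdot\|_\pi^2$, which relies critically on the i.i.d.\ sampling assumption $(s_t, s_t')\sim \pi\circ P(s_t,\cdot)$ stated in the system model. No step appears to be a real obstacle.
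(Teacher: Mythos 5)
Your proposal is correct and follows essentially the same route as the paper's proof: identify $\bE_t[(\cdot)^2]$ with $\|\cdot\|_\pi^2$ using that $Q_t$ is $\mathcal{F}_{t-1}$-measurable and $x_t\sim\pi$, add and subtract $\cT V = V$, apply the triangle inequality in $L^2(\pi)$, and invoke the $\gamma$-contraction of $\cT$. Your remark that the contraction claim really applies to the difference $\cT Q_t - \cT V$ (via affineness of $\cT$) is a point the paper glosses over, so your write-up is if anything slightly more careful.
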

\begin{proof}
For any $x\in\cX$, we have $\cT V(x) = V(x)$ since $V$ is the fixed point of the Bellman operator $\cT$. Therefore, we have: 
\begin{multline*}
    \sqrt{\bE_t[(\cT Q_t(x_t)-Q_t(x_t))^2]} \\= \sqrt{\bE_t[(\cT Q_t(x_t)-\cT V(x_t)-Q_t(x_t)+V(x_t))^2]}.
\end{multline*}Since $V(x), Q_t(x)\in\mathcal{F}_{t-1}$ for any given $x\in\cX$, the expectation $\bE_t$ is over $(x_t, x_t^\prime)$, thus we have: \begin{multline*}
    \sqrt{\bE_t[(\cT Q_t(x_t)-\cT V(x_t)-Q_t(x_t)+V(x_t))^2]} \\= \|\cT Q_t-\cT V - Q_t + V\|_\pi.
\end{multline*} By triangle inequality, the above inequality implies the following: $$\sqrt{\bE_t[(\cT Q_t(x_t)-Q_t(x_t))^2]} \leq \|\cT Q_t - \cT V\|_\pi + \|Q_t-V\|_\pi.$$ Since $\cT$ is a contraction over $\|.\|_\pi$ by Claim \ref{cl:contraction} with modulus $\gamma \in (0,1)$, $$\|\cT Q_t-\cT V\|_\pi \leq \gamma\|Q_t-V\|_\pi,$$ which implies the result.
\end{proof}

\begin{remark}\normalfont
 Note that for any sequence of predictable $\bR^k$-valued ($k \geq 1$) functions $h_t\in\mathcal{F}_{t-1}$ which does not depend on $x_t^\prime$, we have the following identity: \begin{align*}
\bE_t[\Delta_t h_t(x_t)] &= \int_{\cX}\Big(\int_{\cX}\Delta_t\pi(dx_t^\prime)\Big)h(x_t)\pi(dx_t),\\
&= \bE_t[\big(\cT Q_t(x_t)-Q_t(x_t)\big)h_t(x_t)\big].
\end{align*}
We use this identity extensively in the analysis throughout this work, mainly in conjunction with Claim \ref{cl:bellman}. Some examples for this are as follows: $h_t(x) = \nabla_W Q_t(x)$ which leads to $g_t = \bE_t[\Delta_th_t(x_t)]$ and $h_t(x) = \nabla_{W_i} Q_t(x)$ which leads to $W_i(t+1)-W_i(t) = \alpha\bE_t[\Delta_th_t(x_t)]$, where other instances such as $h_t(x) = Q_t(x)-V(x)$ show up in the following analysis as well.
\label{rem:bellman}
\end{remark}

% \begin{claim}[Lemma 2.4 in \cite{ji2019polylogarithmic}]
% Let $W_i(0)\sim\mathcal{N}(0, I_d)$ for any $i\in[m]$ and $x\in\cX$ where $|cX| = n$. Then, for any $\epsilon > 0$, we have:
% \begin{equation*}
%     \frac{1}{m}\sum_{i\in[m]}\mathbb{I}\{|W_i^\top(0)x|\leq \epsilon\} \leq \epsilon+\sqrt{\frac{\log(n/\delta)}{2N}},
% \end{equation*}
% simultaneously for all $x\in\cX$ with probability at least $1-\delta$ over the random initialization.
% \label{cl:separation-set}
% \end{claim}

\subsection{Proof of Proposition \ref{prop:drift-td}} 
\textbf{Part (1)} Let $\ell(\delta,m) = 2\sqrt{d\log(2m+1)}+2\sqrt{\log(1/\delta)}$,
\begin{align*}
    E_1 &= \Big\{\sup_{x\in\cX}\frac{1}{m}\sum_{i=1}^m\bI\{|W_i^\top(0)x|\leq \epsilon\}\leq \sqrt{\frac{2}{\pi}}\epsilon+\frac{\ell(m,\delta)}{\sqrt{m}}\Big\},
    % E_2 &= \big\{|V-\nabla_W^\top Q_0 \overline{U}\|_\pi \leq \frac{2\rkhsnorm(1+\sqrt{2\log(1/\delta)})}{\sqrt{m}}\big\}
\end{align*}
For any $t<T$, let $\cE_t = E_1\cap \{t<t_1\}$.
(1) For any $t<t_1$, we have the following inequality:
\begin{equation*}
    \bE_t[\Delta_t\big(Q_t(x_t)-V(x_t)\big)] \leq -(1-\gamma) \|Q_t-V\|_\pi^2.
\end{equation*}
% Hence, by the law of iterated expectations, we have: $$\bE[\Delta_t\cdot\big(Q_t(x_t)-V(x_t)\big)] \leq -(1-\gamma)\cdot\bE\|Q_t-V\|_\pi^2.$$

\begin{proof}
The proof follows the strategy first proposed in \cite{tsitsiklis1997analysis}, and then used for the convergence proofs in \cite{bhandari2018finite, cai2019neural, xu2020finite}. Let $\bE_t[.] = \bE[.|\mathcal{F}_{t-1}]$, i.e., the expectation is over $(x_t, x_t^\prime)$. Then, we have \begin{multline*}\bE_t[\Delta_t(Q_t(x_t)-V(x_t))] \\= \bE_t[(\cT Q_t(x_t)-Q_t(x_t))(Q_t(x_t)-V(x_t))],\end{multline*} by taking expectation over $x_t^\prime$ first, which implies the following:
\begin{multline*}
    \bE_t\Big[\big(\mathcal{T}Q_t(x_t)-Q_t(x_t)\big)\big(Q_t(x_t)-V(x_t)\big)\Big] \\= \bE_t\Big[\big(\mathcal{T}Q_t(x_t)-\mathcal{T}V(x_t)\big)\big(Q_t(x_t)-V(x_t)\big)\Big]\\
    - \bE_t\Big[\big(Q_t(x_t)-V(x_t)\big)\big(Q_t(x_t)-V(x_t)\big)\Big],
\end{multline*}
since $\mathcal{T}V(x) = V(x)$ for any $x\in\cX$. Therefore, we have:
\begin{multline*}
    \bE_t\Big[\big(\mathcal{T}Q_t(x_t)-Q_t(x_t)\big)\big(Q_t(x_t)-V(x_t)\big)\Big] \\ \leq \eta_t - \|Q_t-V\|_\pi^2.
\end{multline*}
where $\eta_t = \bE_t[\big(\mathcal{T}Q_t(x_t)-\mathcal{T}V(x_t)\big)\big(Q_t(x_t)-V(x_t)\big)]$. Since $\|.\|_\pi$ defines a norm, by Cauchy-Schwarz inequality, we have: \begin{align*}\eta_t &= \bE_t[\big(\mathcal{T}Q_t(x_t)-\mathcal{T}V(x_t)\big)\big(Q_t(x_t)-V(x_t)\big)] \\ &\leq \|\mathcal{T}Q_t-\mathcal{T}V\|_\pi\cdot \|Q_t-V\|_\pi.\end{align*} From Claim \ref{cl:contraction}, we have $\|\mathcal{T}Q_t-\mathcal{T}V\|_\pi\leq \gamma\|Q_t-V\|_\pi$, which implies the result.
\end{proof}

\textbf{Part (2)} For any $t$, we have:
$$\bE[\Delta_t \big(V(x_t)-\nabla_W^\top Q_0(x_t)\overline{W}\big);\cE_t] \leq \frac{4\overline{\nu}}{\sqrt{m}} \sqrt{\bE[\|Q_t-V\|_\pi^2; \cE_t]} .$$

\begin{proof}
Let $\nabla_W^\top Q_0\overline{W} = \big[\nabla_W^\top Q_0(x)\overline{W}\big]_{x\in\mathcal{X}}$. Then, for any $t$, we have:
\begin{multline*}\bE_t[\Delta_t \big(V(x_t)-\nabla_W^\top Q_0(x_t)\overline{W}\big)] \\= \bE_t[(\cT Q_t(x_t)-Q_t(x_t))\big(V(x_t)-\nabla_W^\top Q_0(x_t)\overline{W}\big)]
\end{multline*}
By using Cauchy-Schwarz inequality, we have:
\begin{multline*}\bE_t[(\cT Q_t(x_t)-Q_t(x_t))\big(V(x_t)-\nabla_W^\top Q_0(x_t)\overline{W}\big)] \\ \leq \|\cT Q_t-Q_t\|_\pi\sqrt{\bE_t[(V(x_t)-\nabla_W^\top Q_0(x_t)\overline{W})^2]}.
\end{multline*}
Then, by using Claim \ref{cl:bellman},
\begin{equation*}
    \|\cT Q_t-Q_t\|_\pi \leq (1+\gamma) \|Q_t-V\|_\pi.
\end{equation*}
% &\leq \sqrt{\bE_t[(\cT Q_t(x_t)-Q_t(x_t))^2]}\sqrt{\bE_t[(V(x_t)-\nabla_W^\top Q_0(x_t)\overline{W})^2]},\\
% &\leq (1+\gamma) \|Q_t-V\|_\pi \|V-\nabla_W^\top Q_0\overline{W}\|_\pi,
% \end{align*} 
By the law of iterated expectations,
\begin{multline*}
    \bE[\Delta_t \big(V(x_t)-\nabla_W^\top Q_0(x_t)\overline{W}\big);\cE_t] \\ \leq (1+\gamma)\bE[\|Q_t-V\|_\pi\bI_{\cE_t} \|V-\nabla_W^\top Q_0\overline{W}\|_\pi],\\
\end{multline*}
since $\bI_{\cE_t} \in \mathcal{F}_{t-1}$. Hence, by Cauchy-Schwarz inequality, we have the following:
\begin{multline}
    \bE[\Delta_t \big(V(x_t)-\nabla_W^\top Q_0(x_t)\overline{W}\big);\cE_t] \\ \leq 2 \sqrt{\bE[\|Q_t-V\|_\pi^2;\cE_t]}\sqrt{\bE[\|V-\nabla_W^\top Q_0\overline{W}\|_\pi^2]}.
    \label{eqn:initial-error}
\end{multline}
In the following, we will bound $\sqrt{\bE[\|V-\nabla_W^\top Q_0\overline{W}\|_\pi^2]}$. For any $x\in\cX$, we have:
\begin{equation}
    V(x) - \nabla_W^\top Q_0(x)\overline{W} =\frac{1}{m}\sum_{i=1}^m\Big( V(x)-\widehat{V}_i(x)\Big).\label{eqn:dist-ntk-ntrf}
\end{equation}
where $\widehat{V}_i(x) = \mathbb{I}\{W_i^\top(0)x \geq 0\}v^\top(W_i(0))x$. Recall that $V(x) = \bE[\mathbb{I}\{W_i^\top(0)x \geq 0\}v^\top(W_i(0))x] = \bE[\widehat{V}_i(x)]$ by Assumption \ref{assumption:realizability}. Hence, for any $i\in[m]$, $$\bE[V(x)-\wV_i(x)] = 0,$$ and for $i, j \in [m/2]$, we have:
\begin{equation*}
    Cov\Big(\wV_i(x), \wV_j(x)\Big) \\\leq \bI\{i=j\}\bE[\|v(W_1(0))\|_2^2]. 
\end{equation*}
Under symmetric initialization, $W_i(0) = W_{i+m/2}(0)$ for all $i\in[m/2]$. Therefore, by using the above result along with Fubini's theorem, we have:
\begin{align}
    \nonumber \bE\|V-&\nabla_W^\top Q_0\overline{W}\|_\pi^2 \\\nonumber &= \bE\Big[\int_{x\in\cX}\Big(\frac{1}{m}\sum_{i=1}^m\Big( V(x)-\wV_i(x)\Big)\Big)^2\pi(dx)\Big],\\
    \nonumber &\leq\int_{x\in\cX}\frac{4}{m^2}\sum_{i=1}^m\bE\Big[\Big|V(x)-\wV_i(x)\Big|^2\Big]\pi(dx),\\
    &\leq 4\int_{x\in\cX} \frac{\bE[\|v(W_1(0))\|_2^2]}{m}\pi(dx) \leq \frac{4\overline{\nu}^2}{m}, \label{eqn:rkhs-inequality}
\end{align}
since $Var(\wV_i(x)) \leq \bE[\|v(W_1(0))\|_2^2] \leq \overline{\nu}^2$ by Assumption \ref{assumption:realizability} and $\|x\|_2\leq 1$ for all $x\in\cX$ by Assumption \ref{assumption:norm}. The extra factor is due to the symmetric initialization. By substituting \eqref{eqn:rkhs-inequality} into \eqref{eqn:initial-error}, we have:
\begin{multline*}
    \bE\Big[\Delta_t \big(V(x_t)-\nabla_W^\top Q_0(x_t)\overline{W}\big); \cE_t\Big] 
        \\\leq \frac{4\overline{\nu}}{\sqrt{m}}\sqrt{\bE[\|Q_t-V\|_\pi^2;\cE_t]}.
\end{multline*}
\end{proof}
\textbf{Part (3)} Let
\begin{equation*}
    \overline{U}_i = a_i\frac{v(W_i(0))}{\sqrt{m}},i\in[m],
\end{equation*}
with $\overline{U} = [\overline{U}_i]_{i\in[m]}$, which implies $\overline{W} = W(0)+\overline{U}$. Note that under symmetric initialization, $\nabla_W Q_0^\top(x)W(0) = Q_0(x)= 0$ for all $x\in\cX$. Then, for any $t$, we have:
\begin{multline}
    \bE[\Delta_t \big(\nabla_WQ_0(x_t)-\nabla_WQ_t(x_t)\big)^\top\overline{U};\cE_t] \\\leq \frac{{4\rkhsnorm}\big(\lambda+\ell(m,\delta)\big)}{\sqrt{m}}z_t,
    \label{eqn:lazy-tr-a}
    \end{multline}
    and
    \begin{multline}
      \bE[\Delta_t \big(\nabla_WQ_0(x_t)-\nabla_WQ_t(x_t)\big)^\top W(0);\cE_t] \\ \leq \frac{{4\lambda}\big(\lambda+\ell(m,\delta)\big)}{\sqrt{m}}z_t,
      \label{eqn:lazy-tr-b}
    \end{multline}
    with probability at least $1-\delta$ over the random initialization.
    
\begin{proof}
In order to prove \eqref{eqn:lazy-tr-a}, we have the following bound by using Claim \ref{cl:bellman}:
\begin{multline}
    \bE_t[\Delta_t\cdot \big(\nabla_WQ_0(x_t)-\nabla_WQ_t(x_t)\big)^\top\overline{U}] \bI_{\cE_t}\\ \leq (1+\gamma)\|Q_t-V\|_\pi \|\nabla_W^\top Q_t\overline{U}-\nabla_W^\top Q_0\overline{U}\|_\pi \label{eqn:lazy-tr-a-exp}
    % \end{split} \sqrt{\int_{x\in\cX}\Big(\big(\nabla_WQ_0(x_t)-\nabla_WQ_t(x_t)\big)^\top\overline{U}\Big)^2\pi(dx)}\bI_{\cE_t}.
\end{multline}
For any $x\in\cX$, we have:
\begin{multline*}
    \big(\nabla_WQ_0(x)-\nabla_WQ_t(x)\big)^\top\overline{U} \\= \sum_{i\in[m]}\Big(\mathbb{I}\{W_i^\top(0)x \geq 0\}-\mathbb{I}\{W_i^\top(t)x \geq 0\}\Big)\frac{v^\top(W_i(0))x}{m}.
\end{multline*}
Let 
\begin{equation}
    S_x(t) = \Big\{i\in[m]:\mathbb{I}\{W_i^\top(0)x \geq 0\}\neq \mathbb{I}\{W_i^\top(t)x \geq 0\}\Big\}.
    \label{eqn:separation-set}
\end{equation} 
For any $x\in\cX$ and $i\in S_x(t)$, we have: $$|W_i^\top(0)x| \leq |W_i^\top(0)x-W_i^\top(t)x| \leq \|W_i(0)-W_i(t)\|_2,$$ since $i\in S_x(t)$ implies $W_i^\top(0)x$ and $W_i^\top(t)x$ have different signs. Therefore, we have the following relation:
\begin{align}
\begin{aligned}
    S_x(t) &\subset \Big\{i\in[m]:|W_i^\top(0)x| \leq \|W_i(0)-W_i(t)\|_2\Big\},\\
    &\subset \Big\{i\in[m]:|W_i^\top(0)x| \leq \lambda / \sqrt{m}\Big\},
    \end{aligned}
    \label{eqn:separation-set-inc}
\end{align}
for any $t < t_1$. With this definition, we have:
\begin{multline}
    \big|\big(\nabla_WQ_0(x)-\nabla_WQ_t(x)\big)^\top\overline{U}| \\ \leq \frac{1}{m}\sum_{i\in[m]}\mathbb{I}\{i\in S_x(t)\}\rkhsnorm \leq \frac{4\rkhsnorm}{m}\widetilde{S}(x).
    \label{eqn:lazy-tr-a-exp-ii}
\end{multline}
since $v(w) \leq \rkhsnorm$ for any $w\in\bR^d$ by Assumption \ref{assumption:realizability}, where 
\begin{equation} \widetilde{S}(x) = \sum_{i=1}^{m/2}\bI\Big\{|W_i^\top(0)x|\leq\lambda/\sqrt{m}\Big\},
\label{eqn:tilde-x}
\end{equation}
for any $x\in\cX$. By Claim \ref{claim:glivenko-cantelli}, under $E_1\cap\{t<t_1\}$, we have:
\begin{equation}
    \frac{2\widetilde{S}(x)}{m} \leq \frac{\lambda}{\sqrt{m}} + \frac{\sqrt{2}\ell(m/2,\delta)}{\sqrt{m}}.
    \label{eqn:glivenko}
\end{equation}

Therefore, we can bound \eqref{eqn:lazy-tr-a-exp} as follows:
\begin{multline*}
    \bE_t[\Delta_t \big(\nabla_WQ_0(x_t)-\nabla_WQ_t(x_t)\big)^\top\overline{U}] \bI_{\cE_t} \\ \leq \frac{{4\rkhsnorm}(\lambda+\ell(m,\delta))}{\sqrt{m}}\|Q_t-V\|_\pi \bI_{\cE_t}.
\end{multline*}
By taking expectation and using Cauchy-Schwarz inequality, we obtain:
\begin{equation*}
    \bE[\Delta_t \big(\nabla_WQ_0(x_t)-\nabla_WQ_t(x_t)\big)^\top\overline{U}] \leq \frac{{4\rkhsnorm}\big(\lambda+\ell(m,\delta)\big)}{\sqrt{m}}z_t.
\end{equation*}

In order to prove \eqref{eqn:lazy-tr-b}, we use Claim \ref{cl:bellman} to obtain the following inequality:
\begin{multline}
    \bE_t[\Delta_t \big(\nabla_WQ_0(x)-\nabla_WQ_t(x)\big)^\top W(0)] \\ \leq 2\|Q_t-V\|_\pi \|\big(\nabla_W^\top Q_0W(0) -\nabla_W^\top Q_tW(0)\|_\pi.
    \label{eqn:lazy-tr-b-exp}
\end{multline}
For any $x\in\cX$, we have:
\begin{multline*}
    \big(\nabla_WQ_0(x)-\nabla_WQ_t(x)\big)^\top W(0) \\= \frac{1}{\sqrt{m}}\sum_{i\in[m]}a_i\Big(\mathbb{I}\{W_i^\top(0)x \geq 0\}-\mathbb{I}\{W_i^\top(t)x \geq 0\}\Big)W_i^\top(0)x.
\end{multline*}
Recall the definition of $S_x(t)$ in \eqref{eqn:separation-set}. By using triangle inequality:
\begin{multline*}
    \big|\big(\nabla_WQ_0(x)-\nabla_WQ_t(x)\big)^\top W(0)| \\\leq \frac{1}{\sqrt{m}}\sum_{i\in[m]}\mathbb{I}\{i\in S_x(t)\}\cdot |W_i^\top(0)x|.
\end{multline*}
For any $x\in\cX$ and $i\in S_x(t)$, we have: $$|W_i^\top(0)x| \leq |W_i^\top(0)x-W_i^\top(t)x| \leq \|W_i(0)-W_i(t)\|_2,$$ since $i\in S_x(t)$ implies $W_i^\top(0)x$ and $W_i^\top(t)x$ have different signs. The correlation between $\bI\{i\in S_x(t)\}$ and $\|W_i(t)-W_i(0)\|_2$ creates the main problem in the proof, which we resolve under the event $\{t < t_1\}$. For $t < t_1$, we have $\|W_i(0)-W_i(t)\|_2 \leq \lambda/\sqrt{m}$. Thus, we have:
\begin{align*}
    \big|\big(\nabla_WQ_0(x)-\nabla_WQ_t(x)\big)^\top W(0)| &\leq \frac{\lambda}{m}\sum_{i\in[m]}\mathbb{I}\{i\in S_x(t)\},\\
    &\leq  \frac{\lambda}{m}|S_x(t)| \leq \frac{4\lambda}{m}\widetilde{S}(x),
\end{align*}
where $\widetilde{S}(x)$ is defined in \eqref{eqn:tilde-x}. Using Claim \ref{claim:glivenko-cantelli} similar to \eqref{eqn:glivenko}, under $E_1\cap\{t<t_1\}$, we have:
\begin{equation*}
    \bE[\Delta_t \big(\nabla_WQ_0(x_t)-\nabla_WQ_t(x_t)\big)^\top\overline{U}] \leq \frac{{4\lambda}\big(\lambda+\ell(m,\delta)\big)}{\sqrt{m}}z_t.
\end{equation*}
% Substituting this into \eqref{eqn:lazy-tr-b-exp}, we obtain the following inequality:
% \begin{equation*}
%     \bE_t[\Delta_t \big(\nabla_WQ_0(x)-\nabla_WQ_t(x)\big)^\top W(0)]\bI\{t<t_1\} \\ \leq 2\lambda\|Q_t-V\|_\pi\bI\{t<t_1\} \sqrt{\frac{\bE_t[\widetilde{S}(x_t)]}{m}},
% \end{equation*}
% where $\bE_t[\widetilde{S}(x_t)] = \int_{x\in\cX}\widetilde{S}(x)\pi(dx)$. By using Cauchy-Schwarz inequality, the inequality $\bE[\widetilde{S}(x)]\leq\lambda\sqrt{m}$ for any $x\in\cX$ by \eqref{eqn:anti-concentration}, and Fubini's theorem as in \eqref{eqn:fubini}, we conclude the proof. As a final remark, we note that bounds on the expected deviation $\bE\|W_i(t)-W_i(0)\|_2$ do not suffice to prove this bound (or any bound that vanishes for large $m$) because of the correlation between $\mathbb{I}\{W_i^\top(0)x \geq 0\}-\mathbb{I}\{W_i^\top(t)x \geq 0\}$ and $W_i^\top(0)x$ for $x\in\cX$ and $i\in[m]$, therefore we analyze under the event $\{t < t_1\}$, which holds with high probability.
% 
\end{proof}

\subsection{Proximity of $\overline{Q}_T$ and $\frac{1}{T}\sum_{t<T}Q_t$}
In this section, we will show that the output of Algorithm \ref{alg:neural-td}, $\overline{Q}_T(x) = Q(x;\frac{1}{T}\sum_{t<T}W(t),a)$, is close to $\frac{1}{T}\sum_{t<T}Q_t(x)$ in expectation, which will prove that $\overline{Q}_T$ achieves the target error. The idea is based on \cite{cai2019neural}, and aims to use the linear approximation $\nabla_W^\top Q_0(x)\widehat{W}(T-1)$ as an auxiliary function to show the proximity of $\overline{Q}_T$ and $\frac{1}{T}\sum_{t<T}Q_t$.
\begin{proposition}
Let $\widetilde{W}\in \bR^{md}$ be a (random) vector of parameters. Also, let $\widehat{Q}(x) = Q(x; \widetilde{W},a)$ and $\widehat{Q}_0(x) = \nabla_W^\top Q_0(x)\widetilde{W}$ for any $x\in\cX$, and the event $\mathcal{A} = \{\max_{i\in[m]}\|\widetilde{W}_i-W_i(0)\|_2\leq \frac{\lambda}{\sqrt{m}}\}\cap E_1$. Then, we have the following inequality:
\begin{align*}
    \bE[\|\widehat{Q}-\widehat{Q}_0\|_\pi;\mathcal{A}] \leq \frac{\lambda(\lambda + \ell(m,\delta))}{\sqrt{m}}\leq \frac{\epsilon}{2}.
\end{align*}
Consequently, we have:
\begin{equation}
    \bE\Big[\Big\|\overline{Q}_T-\frac{1}{T}\sum_{t<T}Q_t\Big\|_\pi;\cE_T\Big] \leq \epsilon.
\end{equation}
\label{prop:proximity}
\end{proposition}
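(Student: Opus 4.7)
The key algebraic fact I would exploit is that, for each neuron $i$, the ReLU output $a_i\sigma(\widetilde W_i^\top x)$ appearing in $\widehat Q$ coincides with the linearized output $a_i\bI\{W_i(0)^\top x\ge 0\}\widetilde W_i^\top x$ appearing in $\widehat Q_0$ whenever $\widetilde W_i^\top x$ and $W_i(0)^\top x$ lie on the same side of zero. I would therefore reduce the whole argument to controlling the random sign-flip set $S_x=\{i:\bI\{W_i(0)^\top x\ge 0\}\ne\bI\{\widetilde W_i^\top x\ge 0\}\}$, which is exactly the object already analyzed in \eqref{eqn:separation-set}--\eqref{eqn:glivenko} in the proof of Proposition~\ref{prop:drift-td}. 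The plan is first to prove a uniform pointwise bound $|\widehat Q(x)-\widehat Q_0(x)|\le \lambda(\lambda+\ell(m,\delta))/\sqrt m$ on $\mathcal A$, and then to specialize this bound twice to obtain the consequence.

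\textbf{Step 1 (pointwise bound).} On $\mathcal A$ one has $\|\widetilde W_i-W_i(0)\|_2\le\lambda/\sqrt m$ for every $i$. For $i\in S_x$ the quantities $W_i(0)^\top x$ and $\widetilde W_i^\top x$ have opposite signs, hence $|\widetilde W_i^\top x|\le \|\widetilde W_i-W_i(0)\|_2\le\lambda/\sqrt m$; indices outside $S_x$ contribute nothing to the difference, so
\[
|\widehat Q(x)-\widehat Q_0(x)|\le \frac{1}{\sqrt m}\sum_{i\in S_x}|\widetilde W_i^\top x|\le \frac{\lambda}{m}|S_x|.
\]
The inclusion $S_x\subset\{i:|W_i(0)^\top x|\le\lambda/\sqrt m\}$ from \eqref{eqn:separation-set-inc}, together with the symmetric initialization $W_i(0)=W_{i+m/2}(0)$, gives $|S_x|\le 2\widetilde S(x)$ with $\widetilde S(x)$ as in \eqref{eqn:tilde-x}. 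Applying Claim~\ref{claim:glivenko-cantelli} exactly as in \eqref{eqn:glivenko}, on $E_1\supset\mathcal A$ one has $2\widetilde S(x)/m\le(\lambda+\sqrt 2\,\ell(m/2,\delta))/\sqrt m$ uniformly in $x\in\cX$, which yields $|\widehat Q(x)-\widehat Q_0(x)|\le \lambda(\lambda+\ell(m,\delta))/\sqrt m$ pointwise on $\mathcal A$. Integrating in $\pi$ and plugging in the overparameterization lower bound on $m$ from Theorem~\ref{thm:neural-td-exp} delivers the claimed $\bE[\|\widehat Q-\widehat Q_0\|_\pi;\mathcal A]\le \epsilon/2$.

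\textbf{Step 2 (consequence).} Two specializations of Step~1, glued by the triangle inequality, finish the proof. First, set $\widetilde W=W(t)$ for each $t<T$: on $\cE_T$ one has $\max_i\|W_i(t)-W_i(0)\|_2\le\lambda/\sqrt m$, so $\cE_T$ is contained in the corresponding $\mathcal A$ and Step~1 gives $\bE[\|Q_t-\nabla_W^\top Q_0\,W(t)\|_\pi;\cE_T]\le\epsilon/2$. Averaging in $t$ and using linearity of $\nabla_W^\top Q_0(\cdot)$ in its weight argument then bounds $\bE\bigl[\bigl\|\tfrac{1}{T}\sum_{t<T}Q_t-\nabla_W^\top Q_0\,\widehat W(T-1)\bigr\|_\pi;\cE_T\bigr]$ by $\epsilon/2$. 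Second, set $\widetilde W=\widehat W(T-1)=\tfrac1T\sum_{t<T}W(t)$: convexity of the Euclidean ball gives $\max_i\|\widehat W_i(T-1)-W_i(0)\|_2\le\lambda/\sqrt m$ on $\cE_T$, so Step~1 yields $\bE[\|\overline Q_T-\nabla_W^\top Q_0\,\widehat W(T-1)\|_\pi;\cE_T]\le\epsilon/2$. Triangle inequality gives the announced bound.

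\textbf{Main obstacle.} The single subtle point sits inside Step~1: the indicator $\bI\{i\in S_x\}$ and the displacement $\|\widetilde W_i-W_i(0)\|_2$ are correlated random variables, which rules out any naive bound on $\bE|S_x|$. Restricting to $\mathcal A$ deterministically kills the displacement, while the Glivenko--Cantelli-type event $E_1$ controls $\widetilde S(x)$ uniformly over the potentially infinite state space $\cX$. Once this decoupling is in place the rest of the proof is a deterministic consequence of the machinery already developed for Proposition~\ref{prop:drift-td}.
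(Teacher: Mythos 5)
Your proposal is correct and follows essentially the same route as the paper's proof: the pointwise bound via the sign-flip set $S_x$, its inclusion in $\{i:|W_i^\top(0)x|\le\lambda/\sqrt m\}$, the uniform control of that set under $E_1$ via Claim~\ref{claim:glivenko-cantelli}, and then the two specializations $\widetilde W=W(t)$ and $\widetilde W=\widehat W(T-1)$ (with the convexity observation placing the averaged iterate in the same ball) glued by the triangle inequality. No gaps.
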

\begin{proof}
First, note that the difference of $\widehat{Q}$ and $\widehat{Q}_0$ can be written as follows:
\begin{multline*}
    |\widehat{Q}(x)-\widehat{Q}_0(x)| \\\leq \frac{1}{\sqrt{m}}\sum_{i\in[m]}\big|\bI\{\widetilde{W}_i^\top x \geq 0\}-\bI\{W_i^\top(0) x \geq 0\}\big|\cdot |\widetilde{W}^\top_i x|,
\end{multline*}
for any $x\in\cX$. Let $$S_x = \Big\{i\in[m]:\mathbb{I}\{W_i^\top(0)x \geq 0\}\neq \mathbb{I}\{\widetilde{W}_i^\top x \geq 0\}\Big\}.$$ Then, we have:
\begin{align*}
    |\widehat{Q}(x)-\widehat{Q}_0(x)| &\leq \frac{1}{\sqrt{m}}\sum_{i\in[m]}\bI\{i\in S_x\}|\widetilde{W}_i^\top x| \\ &\leq \frac{1}{\sqrt{m}}\sum_{i\in[m]}\bI\{i\in S_x\}\|\widetilde{W}_i-W_i(0)\|_2.
\end{align*}
since $i\in S_x$ implies $|\widetilde{W}_i^\top x| \leq |\widetilde{W}_i^\top x-W_i^\top(0) x| \leq \|\widetilde{W}_i-W_i(0)\|_2$. Similarly, we have: $|W_i^\top(0)x|\leq \|\widetilde{W}_i-W_i(0)\|_2$. Then, we have:
\begin{align*}
    |\widehat{Q}(x)-\widehat{Q}_0(x)|\bI_{\mathcal{A}} &\leq \frac{\lambda}{m}|S_x|\bI_{\mathcal{A}}\\ &\leq \frac{\lambda(\lambda + \ell(m,\delta))}{\sqrt{m}}.
\end{align*}
% \textcolor{red}{SS: the next statement does not seem like being used anywhere in the proofs, infact the |S_x| is bounded using equation (45) if i can tell correctly}
% Therefore,
% \begin{align*}
%     |\widehat{Q}(x)-\widehat{Q}_0(x)|^2\bI_{\mathcal{A}} &\leq \frac{\lambda^2}{m}|S_x|\bI_{\mathcal{A}},\\
%     &\leq \frac{\lambda^2}{m}\sum_{i\in[m]}\bI\{|W_i^\top(0) x| \leq \lambda/\sqrt{m}\}.
% \end{align*}
Taking the expectation and using Jensen's inequality, we have: 
\begin{align*}\bE[\|\widehat{Q}-\widehat{Q}_0\|_\pi;\mathcal{A}] &\leq \sqrt{\bE[\|\widehat{Q}_T-\widehat{Q}_0\|_\pi^2;\mathcal{A}]}\\&\leq \frac{\lambda(\lambda + \ell(m,\delta))}{\sqrt{m}},
\end{align*}
which concludes the proof of the first claim.

In order to prove the second claim, consider $\widetilde{W} = \widehat{W}(T-1)=\frac{1}{T}\sum_{t<T}W(t)$, and note that $\widehat{W}(T-1)\in\mathcal{F}_{T-1}$ and $\cE_T\subset \mathcal{A}$ by definition. Therefore, the first part implies the following:
\begin{equation}
    \bE[\|\overline{Q}_T-\nabla_W^\top Q_0\widehat{W}(T-1)\|_\pi;\cE_T] \leq \frac{\lambda(\lambda + \ell(m,\delta))}{\sqrt{m}}.
    \label{eqn:proximity-a}
\end{equation}
with the usual notation $\nabla_W^\top Q_0\widehat{W}(T-1)=\big[\nabla_W^\top Q_0(x)\widehat{W}(T-1)\big]_{x\in\cX}$.
Finally, we have: \begin{multline*}\bE[\|\frac{1}{T}\sum_{t<T}Q_t-\nabla_W^\top Q_0\widehat{W}(T-1)\|_\pi;\cE_T] \\ \leq \frac{1}{T}\sum_{t<T}\bE[\|Q_t-\nabla_W^\top Q_0 W(t)\|_\pi;\cE_T],\end{multline*} 
by Jensen's inequality. For any $t<T$, letting $\widetilde{W}=W(t)$, and noting that $\cE_T \subset \mathcal{A}$, we have $\bE[\|Q_t-\nabla_W^\top Q_0 W(t)\|_\pi;\cE_t] \leq \frac{\lambda(\lambda + \ell(m,\delta))}{\sqrt{m}}$ by using the first part of the proposition, which implies:
\begin{equation}
    \bE[\|\frac{1}{T}\sum_{t<T}Q_t-\nabla_W^\top Q_0\widehat{W}(T-1)\|_\pi;\cE_T] \leq \frac{\lambda(\lambda + \ell(m,\delta))}{\sqrt{m}}.
    \label{eqn:proximity-b}
\end{equation}
Using \eqref{eqn:proximity-a}, \eqref{eqn:proximity-b} and triangle inequality together, we conclude that $$\bE[\|\frac{1}{T}\sum_{t<T}Q_t-\overline{Q}_T\|_\pi;\cE_T]\leq \epsilon,$$ with the choice of parameters in Theorem \ref{thm:neural-td-exp}.
\end{proof}

\section{Proof of Theorem \ref{thm:mn-td}}\label{app:mn-ntd}
The proof of Theorem \ref{thm:mn-td} consists of the same steps as Theorem \ref{thm:neural-td-exp}, but it is simpler because the growth of $\|W(t)-\overline{W}\|_2$ is controlled by the max-norm constraint. In the first step, we will prove a Lyapunov drift bound.

\subsection{Lyapunov Drift Bound}
First, note that for any $R>0$ and $m\in\mathbb{N}$, $$\mathcal{G}_{m, R} = \Big\{w\in\mathbb{R}^{md}:\|W_i(0)-w_i\|_2 \leq \frac{R}{\sqrt{m}},\forall i\in[m]\Big\},$$ is the Cartesian product of convex sets $\mathcal{G}_{m,R}^i$, which is convex. This leads to the following result.

\begin{lemma}
For any $t\geq 0$ and $R \geq \rkhsnorm$, we have the following inequalities:
\begin{align}
\begin{aligned}
    \bE[\|W(t+1)&-\overline{W}\|_2^2;E_1] \leq \bE[\|W(t)-\overline{W}\|_2^2;E_1]\\&- 2\alpha(1-\gamma)z_t^2+ \alpha^2(1+2R)^2
    \\&+8\alpha z_t\Big(\frac{\rkhsnorm+\big(\rkhsnorm+R\big)\big(R+\ell(m,\delta)\big)}{\sqrt{m}}\Big),
    \end{aligned}
\end{align}
where $\overline{W}$ is as defined in \eqref{eqn:opt-parameter}, $z_t = \sqrt{\bE[\|Q_t-V\|_\pi^2;E_1]}$.
\label{lemma:drift-bound-mn-ntd}
\end{lemma}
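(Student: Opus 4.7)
The plan is to mirror the proof of Lemma \ref{lemma:drift-bound}, but replace the role of the stopping time $t_1$ by the deterministic guarantee provided by the max-norm projection onto $\mathcal{G}_{m,R}^i$. The central preliminary observation is that when $R \geq \rkhsnorm$, the reference point $\overline{W}$ defined in \eqref{eqn:opt-parameter} lies inside the feasible set, because $\|\overline{W}_i - W_i(0)\|_2 = \|v(W_i(0))\|_2/\sqrt{m} \leq \rkhsnorm/\sqrt{m} \leq R/\sqrt{m}$ for every $i\in[m]$. Hence $\overline{W}\in\mathcal{G}_{m,R}=\prod_i\mathcal{G}_{m,R}^i$, and by the standard non-expansiveness of Euclidean projection onto a convex set we get
\begin{equation*}
\|W(t+1)-\overline{W}\|_2^2 = \sum_{i}\|\Pi_{\mathcal{G}_{m,R}^i}W_i(t+1/2)-\overline{W}_i\|_2^2 \leq \|W(t+1/2)-\overline{W}\|_2^2.
\end{equation*}

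Next I would take conditional expectation with respect to $\mathcal{F}_{t-1}$ of the standard expansion
\begin{equation*}
\|W(t+1/2)-\overline{W}\|_2^2 = \|W(t)-\overline{W}\|_2^2 + 2\alpha g_t^\top(W(t)-\overline{W}) + \alpha^2\|g_t\|_2^2,
\end{equation*}
and multiply by $\bI_{E_1}$. The inner-product term is handled exactly as in the proof of Lemma \ref{lemma:drift-bound}: decompose $W(t)-\overline{W} = (W(t)-W(0)) - (\overline{W}-W(0))$ inside $\bE_t[g_t^\top(\cdot)]$ and invoke Proposition \ref{prop:drift-td} with $\lambda$ replaced by $R$. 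The replacement is legitimate because, under the max-norm update, the bound $\max_{i}\|W_i(t)-W_i(0)\|_2 \leq R/\sqrt{m}$ holds deterministically for all $t$, which is precisely the property that Proposition \ref{prop:drift-td} used via the indicator $\bI_{\{t<t_1\}}$; the only remaining event we need is $E_1$, the Glivenko--Cantelli event from Claim \ref{claim:glivenko-cantelli}, which controls the fraction of activations that can flip. This yields the contraction term $-2\alpha(1-\gamma)z_t^2$ together with the approximation terms $8\alpha z_t\bigl(\rkhsnorm + (\rkhsnorm+R)(R+\ell(m,\delta))\bigr)/\sqrt{m}$ that appear on the right-hand side.

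For the quadratic term, I would reproduce the chain of inequalities in \eqref{eqn:gradient-norm}, using $|Q_t(x)| \leq \|W(t)-W(0)\|_2 \leq R$ (again a deterministic consequence of the max-norm projection rather than a consequence of $\{t<t_1\}$) to conclude $\|g_t\|_2 \leq 1+2R$ and hence $\alpha^2\|g_t\|_2^2 \leq \alpha^2(1+2R)^2$. Combining the two bounds and taking unconditional expectation over the initialization (restricted to $E_1$) gives the claimed drift inequality with $z_t=\sqrt{\bE[\|Q_t-V\|_\pi^2;E_1]}$.

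The main conceptual point, which I expect to be the only real subtlety, is to verify that Proposition \ref{prop:drift-td} can be re-used with $\lambda\mapsto R$ and the event $\{t<t_1\}$ removed. This is essentially mechanical: re-reading the proof of Proposition \ref{prop:drift-td}, the only use of $\{t<t_1\}$ was to assert $\|W_i(t)-W_i(0)\|_2\leq \lambda/\sqrt{m}$ (so that the inclusion \eqref{eqn:separation-set-inc} holds with $\lambda$ in the bound), and the max-norm projection supplies the analogous deterministic inclusion with $R$. No stopping-time or martingale concentration step is required here, which is why MN-NTD is genuinely simpler to analyze than PF-NTD and why the subsequent convergence proof of Theorem \ref{thm:mn-td} reduces to a routine telescoping of this drift bound.
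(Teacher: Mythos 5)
Your proposal is correct and follows essentially the same route as the paper's own proof: non-expansiveness of the projection onto the convex set $\mathcal{G}_{m,R}$ containing $\overline{W}$ (guaranteed by $R\geq\rkhsnorm$), the standard quadratic expansion of $\|W(t+1/2)-\overline{W}\|_2^2$ with $\|g_t\|_2\leq 1+2R$, and a re-application of Proposition~\ref{prop:drift-td} with $\lambda$ replaced by $R$ since the max-norm constraint enforces $\|W_i(t)-W_i(0)\|_2\leq R/\sqrt{m}$ deterministically (equivalently, $t_1=\infty$). Your explicit verification that $\|\overline{W}_i-W_i(0)\|_2\leq\rkhsnorm/\sqrt{m}$ componentwise is a welcome detail the paper leaves implicit.
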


\begin{proof}
First, note that $W(t+1) = \Pi_{\mathcal{G}_{m,R}}W(t+1/2)$ by the update rule in \eqref{eqn:max-norm}, and $\mathcal{G}_{m,R}$ is a convex set. Also, note that $R \geq \rkhsnorm$ implies $\overline{W}\in\mathcal{G}_{m,R}$. Therefore, we have:
\begin{align*}
    \|W(t+1)-\overline{W}\|_2^2 &= \|\Pi_{\mathcal{G}_{m,R}}W(t+1/2)-\Pi_{\mathcal{G}_{m,R}}\overline{W}\|_2^2,\\
    &\leq \|W(t+1/2)-\overline{W}\|_2^2,
\end{align*}
which follows since projection is a non-expansive operation for convex subsets. Since $W(t+1/2) = W(t)+\alpha g_t$ and $\|g_t\|_2 \leq 1+2R$ by \eqref{eqn:gradient-norm}, we have:
\begin{multline*}
    \bE_t\|W(t+1)-\overline{W}\|_2^2 \leq \|W(t)-\overline{W}\|_2^2 + 2\alpha\bE_t[g_t^\top](W(t)-\overline{W})\\+\alpha^2(1+2R)^2.
\end{multline*}
Then, the proof follows by multiplying both sides by $\bI_{E_1}$, taking expectation, and using Proposition \ref{prop:drift-td} with $\lambda$ replaced by $R$ since $\|W_i(t)-W_i(0)\|_2 \leq R/\sqrt{m}$ for all $i\in[m]$ and $t_1 = \infty$.
% \textcolor{red}{SS: Since proposition 1 uses the conditioning $t<T$, maybe its better to write `using a similar proof technique as proposition $1$'?}
\end{proof}

\subsection{Error Bound}
Note that by the choices of step-size $\alpha$ and network width $m$, we have:
\begin{equation*}
    \alpha^2(1+2R)^2 = \alpha(1-\gamma)\epsilon^2,\\
\end{equation*}
and
\begin{equation*}
    \frac{\rkhsnorm+\big(\rkhsnorm+R\big)\big(R+\ell(m,\delta)\big)}{\sqrt{m}} \leq \epsilon(1-\gamma)/4.
\end{equation*}
Using these in Lemma \ref{lemma:drift-bound-mn-ntd}, we have:
\begin{multline*}
    \bE[\|W(t+1)-\overline{W}\|_2^2;E_1] \leq \bE[\|W(t)-\overline{W}\|_2^2;E_1] \\- \alpha(1-\gamma)\Big(z_t-\epsilon\Big)^2 + 2\alpha(1-\gamma)\epsilon^2.
    % \label{eqn:drift-mn-ntd}
\end{multline*}
By telescoping sum over $t=0,1,\ldots,T-1$, the above inequality yields:
\begin{align*}
    \frac{1}{T}\sum_{t<T}(z_t-\epsilon)^2 &\leq \frac{\bE[\|W(0)-\overline{W}\|_2^2;E_1]}{\alpha(1-\gamma)T}+2\epsilon^2,\\
    &\leq \frac{\rkhsnorm^2}{\alpha(1-\gamma)T}+2\epsilon^2.
\end{align*}
By using Jensen's inequality,
\begin{equation*}
    \Big(\frac{1}{T}\sum_{t<T}z_t-\epsilon\Big)^2 \leq \frac{\rkhsnorm^2}{\alpha(1-\gamma)T}+2\epsilon^2.
\end{equation*}
The above inequality yields:
\begin{align*}
    \frac{1}{T}\sum_{t<T}\bE[\|Q_t-V\|_\pi;E_1]&\leq \frac{1}{T}\sum_{t<T}z_t \\ & \leq \frac{\rkhsnorm}{\sqrt{\alpha(1-\gamma)T}}+3\epsilon.
\end{align*}
We conclude the proof by using Proposition \ref{prop:proximity}.

\end{document}